\def\orcidID#1{\href{http://orcid.org/#1}{\protect\raisebox{-1.25pt}{\protect\includegraphics{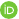}}}}
\newcommand\litell{\ell}
\newcommand\varx{x\xspace}
\newcommand\olnot[1]{\overline{#1}\xspace}
\newcommand{\coeffa}{a\xspace}
\newcommand{\constrC}{C\xspace}
\newcommand\constrc{\constrC}
\newcommand{\formF}{F\xspace}
\newcommand{\formula}{\formF}
\newcommand{\formf}{\formF}
\newcommand{\objective}{O}
\newcommand{\objs}{\mathbf{O}}
\newcommand\subst{\omega}
\newcommand{\assmnt}{\alpha}
\newcommand{\assmntone}{\alpha}
\newcommand\assmnttwo{\beta}
\newcommand{\sumnodisplay}{{\textstyle \sum}}
\newcommand\under[1]{\!\!\upharpoonright_{\!#1}}
\newcommand\lowerbound{L}
\newcommand\ignore[1]{}
\newcommand\weaklyDominates{\preceq_P}
\newcommand\dominates{\prec_P}
\providecommand\citet[1]{{\color{red}Authors?}~\cite{#1}}
\newcommand\veripb{\textsc{VeriPB}\xspace}
\newcommand\coreset{\mathcal{C}}
\newcommand\derivedset{\mathcal{D}}
\newcommand\order{\mathcal{O}}
\newcommand\varsu{\vv{u}}\newcommand\varsx{\vv{x}}
\newcommand\varsv{\vv{v}}\newcommand\varsw{\vv{w}}
\newcommand\varsz{\vv{z}} 
\newcommand\lit{\ell}
\newcommand\degree{A}
\newcommand\lequiv\Leftrightarrow
\newcommand\proofVeriPB{P}
\newcommand\pminimal{$P$-minimal\xspace}
\newcommand\pmin{\pminimal}
\newcommand\bioptsat{\textsc{BiOptSat}\xspace}
\newcommand\bos{\bioptsat}
\newcommand\lowerbounding{\textsc{LowerBound}\xspace}
\newcommand\eg{e.g.,\xspace}
\newcommand\citepmin{\cite{KoshimuraEtAl2009MinimalModelGeneration,SohEtAl2017SolvingMultiobjectiveDiscrete}\xspace}
\newcommand\citelb{\cite{CortesEtAl2023NewCoreGuided}\xspace}
\newcommand\citebos{\cite{JabsEtAl2024FromSingleObjective}\xspace}
\newcommand\nth[1]{$#1^{\text{th}}$\xspace }
\newcommand\limpliedby{\Leftarrow} 
\newcommand\limplies{\Rightarrow} 
\newcommand\mininc{\texttt{MinInc}\xspace}
\newcommand\mindec{\texttt{MinDec}\xspace}
\newcommand\pdcut{PD cut\xspace}
\newcommand\pdcuts{PD cuts\xspace}
\newcommand\worse{w} 
\begin{document}
\title{Certifying Pareto-Optimality in\\Multi-Objective Maximum Satisfiability}
\author{Christoph Jabs\inst{1}\orcidID{0000-0003-3532-696X} \and
Jeremias Berg\inst{1}\orcidID{0000-0001-7660-8061}\and
Bart Bogaerts\inst{2,3}\orcidID{0000-0003-3460-4251} \and
Matti J\"arvisalo\inst{1}\orcidID{0000-0003-2572-063X}
}
\authorrunning{Jabs et al.}
\institute{University of Helsinki, Helsinki, Finland \\
\email{\{christoph.jabs, jeremias.berg, matti.jarvisalo\}@helsinki.fi}\\
\and
KU Leuven, Leuven, Belgium\\
\email{bart.bogaerts@kuleuven.be}\\
\and 
Vrije Universiteit Brussel, Brussels, Belgium}
\maketitle              %
\begin{abstract}
  Due to the wide employment of automated reasoning in
  the analysis and construction of correct systems, the results reported
  by automated reasoning engines must be trustworthy.
  For Boolean satisfiability (SAT) solvers---and more recently
  SAT-based maximum satisfiability (MaxSAT) solvers---trustworthiness is 
  obtained by integrating proof logging into solvers, making solvers
  capable of emitting machine-verifiable proofs to certify correctness of the reasoning steps performed.
  In this work, we enable for the first time
  proof logging based on the \veripb proof format for  multi-objective MaxSAT (MO-MaxSAT) optimization
  techniques.
 Although \veripb does not offer direct support for multi-objective problems,
 we detail how preorders in \veripb can be used
to provide certificates for               
MO-MaxSAT algorithms computing a representative solution for each element in the non-dominated
 set of the search space under Pareto-optimality,
 without extending the \veripb format or the proof checker.
By implementing \veripb proof logging into a state-of-the-art multi-objective MaxSAT solver,
we show empirically that proof logging can be made scalable for MO-MaxSAT with reasonable overhead.

\keywords{Multi-objective combinatorial optimization  \and maximum satisfiability \and proof logging}
\end{abstract}

\section{Introduction}

Automated reasoning is central in enabling the analysis and construction of correct systems.
Practical solvers developed in the realm of automated reasoning,
such as Boolean satisfiability (SAT) solvers~\cite{BHvW21HandbookSatisfiability-SecondEdition}, facilitate the development of 
more complex automated reasoning systems. One
successful example of such generic SAT-based approaches are 
solvers developed for maximum satisfiability~(MaxSAT)~\cite{BJM21MaximumSatisfiabiliy}---the optimization extension of SAT---enabling solving various NP-hard real-world optimization problems~\cite{BJM21MaximumSatisfiabiliy}.
Further, SAT-based approaches are being generalized and
developed for MaxSAT under multiple objectives, i.e., multi-objective MaxSAT~\cite{DBLP:conf/sat/CabralJM22,CortesEtAl2024SlideDrill,CortesEtAl2023NewCoreGuided,DBLP:journals/cor/GuerreiroCVBLMF23,JBIJ23PreprocessingSAT-BasedMulti-ObjectiveCombinatorialOptimization,JBJ24CoreBoostingSAT-BasedMulti-objectiveOptimization,JabsEtAl2024FromSingleObjective,SohEtAl2017SolvingMultiobjectiveDiscrete,TerraNevesEtAl2018MultiObjectiveOptimization}, with the aim
of extending the success of MaxSAT to more efficiently solving real-world multi-objective optimization
problems, from, e.g., staff scheduling~\cite{DBLP:journals/anor/DemirovicMW19}
through package upgradeability~\cite{DBLP:journals/jsat/JanotaLMM12} to finding
interpretable classifiers~\cite{DBLP:conf/cp/MaliotovM18}.

The more SAT and MaxSAT solvers are used in real-world settings, the more important it is to be able to
trust the results solvers provide. While solutions are generally easy to confirm,
solvers should be trustworthy also
when they report unsatisfiability or, in the context of optimization, when the solvers claim that a
solution is optimal and hence no better solutions exist.
In response to these concerns, proof logging and checking techniques for SAT solvers have been
developed and widely
adopted~\cite{GN03VerificationProofsUnsatisfiabilityCNFFormulas,CHHKS17EfficientCertifiedRATVerification,CMS17EfficientCertifiedResolutionProofChecking},
among which
DRAT~\cite{HHW13Trimmingwhilecheckingclausalproofs,HHW13VerifyingRefutationsExtendedResolution}
remains today the de facto standard in the context of SAT solving.
However, DRAT and other SAT proof formats work purely on propositional clauses, which
makes them unsuitable for proof logging MaxSAT solvers. Instead, the
 \veripb
format~\cite{GN21CertifyingParityReasoningEfficientlyUsingPseudo-Boolean,BGMN23CertifiedDominanceSymmetryBreakingCombinatorialOptimisation}, which is based on 
pseudo-Boolean constraints (i.e., 0-1 linear inequalities) and
offers direct support for reasoning about %
objective values in
single-objective optimization problems, has enabled %
proof
logging for %
various optimization
contexts~\cite{BBNOV23CertifiedCore-GuidedMaxSATSolving,BGMN23CertifiedDominanceSymmetryBreakingCombinatorialOptimisation,DMMNOS24Pseudo-BooleanReasoningAboutStatesTransitionsCertify,EGMN20JustifyingAllDifferencesUsingPseudo-BooleanReasoning,GMNO22CertifiedCNFTranslationsPseudo-BooleanSolving,GMMNPT20CertifyingSolversCliqueMaximumCommonConnected,GMN20SubgraphIsomorphismMeetsCuttingPlanesSolving,GMN22AuditableConstraintProgrammingSolver,GN21CertifyingParityReasoningEfficientlyUsingPseudo-Boolean,HOGN24CertifyingMIP-BasedPresolveReductions0-1Integer,IOTBJMN24CertifiedMaxSATPreprocessing,VDB22QMaxSATpbCertifiedMaxSATSolver},
including MaxSAT solving~\cite{BBNOPV24CertifyingWithoutLossGeneralityReasoningSolution-Improving,BBNOV23CertifiedCore-GuidedMaxSATSolving,IOTBJMN24CertifiedMaxSATPreprocessing,msc/Vandesande23,VDB22QMaxSATpbCertifiedMaxSATSolver}.

In this work, we enable proof logging for various recently-proposed multi-objective MaxSAT
solving techniques. To the best of our knowledge, this is the first work enabling proof logging
in multi-objective optimization.%
\footnote{Although relatively distant to the present work, there is some work on certificates in the context of multi-objective queries in Markov decision processes~\cite{DBLP:conf/qestformats/BaierCK24}.}
Our solution builds on the \veripb format.
 It is critical to note that
\veripb does not offer direct support for multiple objective functions,
and is thereby seemingly restricted to proof logging single-objective
optimization algorithms.
However, as we will detail, proof logging for MO-MaxSAT can in fact be enabled without extending
the \veripb format or the proof checker. In particular, in order to provide certificates for
MO-MaxSAT algorithms developed for computing a representative solution for each element in the so-called non-dominated
 set of the search space under Pareto-optimality~\cite{E05MulticriteriaOptimization2ed}, we make in a specific way
use of preorders supported by \veripb. While preorders were
first introduced to \veripb for certifying symmetry and dominance
breaking~\cite{BGMN23CertifiedDominanceSymmetryBreakingCombinatorialOptimisation}, here we show that, in fact,
a single preorder suffices for certifying that an MO-MaxSAT algorithm has 
computed a representative solution at each element of the non-dominated set.
As representative MO-MaxSAT techniques, we detail \veripb-based proof logging for variants of
the \pminimal~\citepmin, \bioptsat~\citebos, and \lowerbounding~\citelb  approaches,
as well as the recently-proposed MO-MaxSAT preprocessing/reformulation
technique of core boosting~\cite{JBJ24CoreBoostingSAT-BasedMulti-objectiveOptimization} which has been shown to provide
considerable runtime improvements to MO-MaxSAT solvers.
By adding \veripb proof logging to the implementations of these approaches in the MO-MaxSAT solver Scuttle~\cite{scuttle,JBIJ23PreprocessingSAT-BasedMulti-ObjectiveCombinatorialOptimization,JBJ24CoreBoostingSAT-BasedMulti-objectiveOptimization}, we show empirically that proof logging can be made scalable for MO-MaxSAT, with average
proof logging overhead ranging from 14\% to 29\% depending on the solving approach.

\section{Preliminaries}

We begin with necessary preliminaries related to multi-objective MaxSAT and VeriPB proofs.

\subsection{Clauses and Pseudo-Boolean Constraints}

A literal $\litell$ is a $\{0,1\}$-valued Boolean variable $\varx$ or its negation $\olnot{\varx} \equiv 1-\varx$. 
A propositional clause $\constrC = (\litell_1 \lor \ldots \lor \litell_k)$ is a disjunction of literals. A formula in conjunctive 
normal form (CNF)
$\formula = \constrC_1 \land \ldots \land \constrC_m$ is a conjunction of clauses. We often think of clauses as sets of literals and 
formulas as sets of clauses.

A (normalized)
pseudo-Boolean (PB) constraint is a 0-1 linear inequality $\constrC = \sumnodisplay_{i} \coeffa_i \litell_i \geq b$ where \(\coeffa_i\) are positive integers and \(b\) a non-negative integer.
We will assume wlog that all PB constraints are in \emph{normal form}, meaning that the $\litell_i$ are over different variables and all coefficients $\coeffa_i$ and the bound are positive. A pseudo-Boolean (PB) formula 
is a conjunction (or set) of PB constraints. We identify the propositional clause $\constrC = (\litell_1 \lor \ldots \lor \litell_k)$
with the PB constraint $\sumnodisplay_{i=1}^k \litell_i \geq 1$. %
This is convenient as the algorithms and 
solvers that we develop proof logging for expect propositional clauses as input, and---as detailed in this work---produce their proofs in pseudo-Boolean format. 
If $\constrc$ is the PB constraint $\sum_i\coeffa_i\litell_i\geq b$, we write \(\neg C\) for its negation \(\sum_i\coeffa_i\olnot{\litell_i} \ge \sum_i\coeffa_i - b + 1\).
If $p$ is furthermore a variable, we write $p\lequiv \constrc$ for the two constraints
expressing that \(p\) implies \(\constrc\) and vice versa, i.e., 
\(Mp + \sum_i\coeffa_i\olnot{\litell_i}\geq M\) and \(b \olnot{p} + \sum_i\coeffa_i\litell_i \geq b\) where \(M=\sum_i\coeffa_i -b +1\).
An \emph{objective} $\objective$ is an expression $\sum_i \coeffa_i \litell_i+\lowerbound$ where the $\coeffa_i$ and $\lowerbound$ are integers. 

A substitution \(\subst\) maps each variable in its domain to a truth value (either $0$ or $1$) or to another literal.
We denote by $\constrc\under\subst$  the constraint obtained from $\constrc$ by replacing each variable $\varx$ in the domain of $\subst$ by $\subst(\varx)$; the notations
$\objective\under\subst$, $\formf\under\subst$, and \(\varsx\under\subst\) for a tuple of variables \(\varsx\) are defined analogously. 
An assignment \(\assmnt\) is a substitution that maps only onto $\{0,1\}$.
When convenient, we view an assignment as the set of literals it sets to $1$.
An assignment $\assmnt$ is \emph{complete} for a constraint, formula, or objective if $\assmnt$
maps each variable in them to a value, and partial otherwise. 
The assignment $\assmnt$ satisfies a constraint $\constrc$ if the constraint $\constrc\under\assmnt = \sumnodisplay_{i} \coeffa'_i \litell'_i \geq b'$ obtained after normalization has $b'=0$, and falsifies $\constrc$ if $\sumnodisplay_{i} \coeffa'_i < b'$. In other words, $\assmnt$ satisfies $\constrc$ if simplifying $\constrc$ by $\assmnt$ leads to a trivial constraint and falsifies $\constrc$ if no extension of  $\assmnt$ satisfies $\constrc$. An assignment $\assmnt$ is a solution to a formula $\formula$ if $\assmnt$ satisfies all constraints in $\formula$.
A constraint $\constrc$ is \emph{implied} by $\formula$ (denoted by $\formula\models\constrc$) if all solutions of $\formula$ also satisfy $\constrc$.

\subsection{Multi-Objective MaxSAT}

An instance $(\formula, \objs)$ of multi-objective MaxSAT (MO-MaxSAT) consists of a CNF formula $\formula$ and a set $\objs = (\objective_1, \ldots, \objective_p)$ 
of $p$ objectives under minimization. This definition for MO-MaxSAT captures standard (single-objective, weighted partial) MaxSAT by setting $p=1$; see, e.g., \citebos.
 Given two assignments $\assmntone$ and~$\assmnttwo$ that are complete for each $\objective_i$, we say that 
$\assmntone$ \emph{weakly dominates} $\assmnttwo$ 
(and write $\assmntone\weaklyDominates\assmnttwo$) if $\objective_i\under{\assmntone} \leq \objective_i\under{\assmnttwo}$ holds for each $i=1,\dots,p$. Note that $\objective\under{\assmnt}$ is an integer value when $\assmnt$ is complete for $\objective$.  If additionally  $\objective_t\under{\assmntone} < \objective_t\under{\assmnttwo}$ for some \(t\), we say that $\assmntone$ \emph{dominates} $\assmnttwo$ (and write $\assmntone\dominates\assmnttwo$). A solution $\assmnt$ to $\formula$ is Pareto-optimal for \((\formula, \objs)\) if \(\assmnt\) is not dominated by any other solution to $\formula$. The \emph{non-dominated set} of \((\formula, \objs)\) consists of the tuples of objective values of Pareto-optimal solutions, i.e., $\{ (\objective_1\under\assmnt, \ldots, \objective_p\under\assmnt) \mid \assmnt \mbox{ is Pareto-optimal for } (\formula, \objs)\}$. Note that each element in the non-dominated set can correspond to several Pareto-optimal solutions. We focus on the task of computing a representative solution for each element of the non-dominated set. 

\begin{figure}[t]
  \begin{minipage}{.43\textwidth}
  \begin{align*}
    \formula =& (\varx_1 \lor \varx_2 \lor \varx_3) \land (\varx_1 \lor \varx_2 \lor \varx_4) \\
              & \land (\varx_2 \lor \varx_3 \lor \varx_5) \land (\varx_3 \lor \varx_4 \lor \varx_5) \\
    \objective_1 &= 3 \varx_2 + 4 \varx_3 + 2 \varx_4 + 5 \varx_5 \\
    \objective_2 &= 7 \varx_1 + 4 \varx_2 + 1 \varx_3 + 2 \varx_4
  \end{align*}
  \end{minipage}
  \;
  \begin{minipage}{.52\textwidth}
    \centering
    \includegraphics{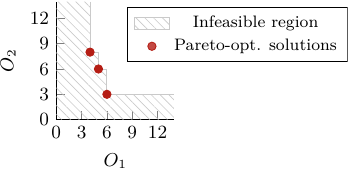}
  \end{minipage}
  \caption{A bi-objective MaxSAT instance and its Pareto-optimal solutions.}\label{fig:running-ex}
\end{figure}

\begin{example}
  Consider the bi-objective MaxSAT instance
  \((\formula,(\objective_1,\objective_2))\)  in
  Fig.~\ref{fig:running-ex}. Its non-dominated set is
  \(\{(4,8),(5,6),(6,3)\}\).  Its three Pareto-optimal solutions are
  \(\{\varx_1,\olnot{\varx_2},\varx_3,\olnot{\varx_4},\olnot{\varx_5}\}\),
  \(\{\olnot{\varx_1},\varx_2,\olnot{\varx_3},\varx_4,\olnot{\varx_5}\}\), and
  \(\{\olnot{\varx_1},\olnot{\varx_2},\varx_3,\varx_4,\olnot{\varx_5}\}\).
\end{example}

\subsection{Multi-Objective MaxSAT Solving}                                                                                                   
                                                                                                                                 
We consider the multi-objective MaxSAT problem of finding a representative
solution of each element in the non-dominated set. Various algorithms for this
problem setting have been proposed
recently~\cite{CortesEtAl2024SlideDrill,CortesEtAl2023NewCoreGuided,JabsEtAl2024FromSingleObjective,SohEtAl2017SolvingMultiobjectiveDiscrete,TerraNevesEtAl2018MultiObjectiveOptimization}.
These algorithms make incremental use of a SAT
solver~\cite{ES03ExtensibleSAT-solver,MLM21Conflict-DrivenClauseLearningSATSolvers} while adding
constraints to the working formula, ending with an unsatisfiable working formula
once all elements in the non-dominated set  have been discovered.

For many of the existing algorithms for MO-MaxSAT, a crucial building block is
what we call a Pareto-dominance cut, or \pdcut for short. A \pdcut is a
(set of) constraint(s) that, given a solution \(\assmnt\), is falsified exactly by all
solutions that are weakly dominated by \(\assmnt\) (including \(\assmnt\) itself).
Adding a \pdcut to the working formula therefore excludes solutions weakly dominated by \(\assmnt\)
from further consideration. Note that for the single-objective case
(\(p=1\)), a \pdcut is identical to a solution-improving constraint, admitting only ``better'' solutions. 
This no longer holds when $p>1$ since Pareto-dominance is not a total order: solutions that are incomparable to $\alpha$ will satisfy the \pdcut. 

For objectives \(\objs=(\objective_1,\dots,\objective_p)\) and solution
\(\assmnt\), let \(\worse_i\) be fresh variables for each \(\objective_i\) with their semantics defined by
\begin{equation}\label{eq:worsedef}
  \worse_i \lequiv \objective_i \geq \objective_i\under\assmnt.
\end{equation}
The \pdcut is the clause \((\olnot{\worse_1} \lor \dots \lor \olnot{\worse_p})\). For
encoding~\eqref{eq:worsedef}, MO-MaxSAT algorithms use a choice of various CNF encodings~\cite{BailleuxBoufkhad2003EfficientCNFEncoding,EenSoerensson2006TranslatingPseudoBoolean,JoshiEtAl2015GeneralizedTotalizerEncoding,KP19Encodingcardinalityconstraintsusingmultiwaymerge,MartinsEtAl2014IncrementalCardinalityConstraints}.
We will return in more detail to how \pdcuts are employed in the \pminimal~\citepmin,
\lowerbounding~\citelb, and \bioptsat~\citebos algorithms
in Section~\ref{sec:main} when detailing \veripb 
proof logging for each of these algorithms.

\subsection{\veripb}\label{sec:proofsystem}

We now overview a  simplified version of the \veripb proof system, only discussing the rules that are relevant for our current exposition. 
For instance, while \veripb supports \emph{single-objective} optimization, we will only use the decision version of this proof system.  
We refer the interested reader to earlier work~\cite{BGMN23CertifiedDominanceSymmetryBreakingCombinatorialOptimisation,GN21CertifyingParityReasoningEfficientlyUsingPseudo-Boolean} for an exposition of the full proof system.

Given a PB input formula $\formula$, the \veripb proof system
maintains a \emph{proof configuration} $\langle\coreset,\derivedset,\order,\varsz\rangle$, consisting of two sets of constraints, the \emph{core set} $\coreset$ and the \emph{derived set} $\derivedset$, a pseudo-Boolean formula $\order(\varsu,\varsv)$ over two tuples of variables $\varsu,\varsv$ that do not appear in $\coreset$, and a tuple of variables $\varsz$. 
The core set can be thought of as being equal
to $\formula$ and the derived set as consisting of all constraints derived in the proof.
The order $\order$ defines a preorder $\preceq$ on assignments as follows. 
If $\alpha$ and $\beta$ are assignments, then $\alpha\preceq\beta$ iff $\order(\varsz\under\alpha,\varsz\under\beta)$ is true. 
The proof system will guarantee that  $\preceq$ is indeed a preorder, i.e., a reflexive and transitive relation.
The preorder in this configuration was originally introduced in the context of symmetry and dominance breaking~\cite{BGMN23CertifiedDominanceSymmetryBreakingCombinatorialOptimisation}. Here we will use the preorder for a different purpose and will in fact not use the dominance rule introduced in~\cite{BGMN23CertifiedDominanceSymmetryBreakingCombinatorialOptimisation}. 
The precise role of the preorder in our proofs for MO-MaxSAT will be detailed in Section~\ref{sec:main}. 

The configuration is initialized by setting $\coreset=\formula$, $\derivedset=\emptyset$, $\order=\emptyset$ (the empty, and hence trivially true formula) and $\varsz=()$, the empty tuple. 
Afterwards, the configuration is updated using the rules detailed next.

New constraints can be added to $\derivedset$ by deriving them from previously derived constraints in $\coreset\cup\derivedset$
using the
\emph{cutting planes} proof system~\cite{CCT87complexitycutting-planeproofs} consisting of the following rules.
\begin{description}
	\item [Literal Axioms.]
	For any literal~$\lit$, $\lit \geq 0$ is an axiom and can be derived.
	
	\item [Linear Combination.]
	Any positive integer linear combination of two previously derived PB constraints can be inferred. 	
	\item [Division.]
	Given the normalized PB constraint $ \sum_i w_i \lit_i \geq \degree$
	and a positive integer~$c$, 	the constraint
	$ \sum_i \lceil w_i/c \rceil \lit_i \geq \lceil
	\degree/c \rceil$
	can be inferred.
\end{description}
Conveniently, \veripb also allows adding an implied constraint without giving an actual cutting planes derivation, namely, when the constraint is implied by 
\emph{reverse unit propagation (RUP)}, a generalization of the same notion in SAT~\cite{GN03VerificationProofsUnsatisfiabilityCNFFormulas}. RUP states that if applying integer bounds consistency propagation on $\coreset\cup\derivedset\cup \{\neg C\}$ %
results in a contradiction, then $C$ is implied by $\coreset\cup\derivedset$ and  can hence be derived.

Additionally, \veripb also allows for deriving non-implied constraints as long
as the constraints are guaranteed to preserve satisfiability. Specifically 
\veripb has the following generalization of the resolution asymmetric tautology (RAT) rule in SAT~\cite{JHB12InprocessingRules}. 
\begin{description}
	\item [Redundance-Based Strengthening.]
	The constraint~$C$ can be derived and added to~$\derivedset$ given a  substitution~$\subst$ and explicit (cutting planes)	proofs
for
	\[\coreset  \cup\derivedset\cup \{\neg{C} \}
	\models
	{{(\coreset \cup\derivedset\cup \{C\})}\under{\subst}}
	\cup \order(\varsz\under\subst,\varsz),\]
  i.e., explicit proofs that the  constraint on the right is implied by the premises $\coreset  \cup\derivedset\cup \{\neg{C} \}$.
\end{description}
\noindent Intuitively, this rule ensures that  $\subst$ remaps any solution $\assmnt$ of $\coreset\cup\derivedset$
that does not satisfy~$C$ to a solution $\assmnt\circ\subst$ of $\coreset\cup\derivedset$ that 
(i)~satisfies~$C$ and (ii)~for which $\assmnt\circ\subst\preceq\assmnt$, i.e., $\assmnt\circ\subst$ is least as good (in terms of the order) as $\assmnt$. 
A common application of redundance-based strengthening is
\emph{reification}: deriving 
two pseudo-Boolean constraints that encode
$\lit \lequiv D$
for some PB constraint~$D\in\coreset\cup\derivedset$ and
for some fresh literal $\lit$. 

In addition to adding
constraints, previously derived constraints can also be \emph{deleted} in order to reduce the number of constraints that the proof checker has to work with. However, deletion requires care.
Without restrictions, deleting everything in $\coreset$ could make an unsatisfiable formula satisfiable, which would clearly
be incorrect. Deletion is allowed using the following rules.
\begin{description} 
	\item[Derived Deletion.] Any constraint can be removed from $\derivedset$. 
	\item[Core Deletion.] A constraint $C$ can be removed from $\coreset$ if $C$ can be derived from $\coreset\setminus \{C\}$ with the redundance-based strengthening rule. 
\end{description}

\noindent Solutions found are  logged and excluded from further consideration by learning a constraint using the following rule.
\begin{description}
	\item[Solution Logging.] Given a solution $\assmnt$ to $\coreset\cup\derivedset$, we can derive the constraint $\sum_{\lit \in \assmnt}\olnot\lit \geq 1$ that excludes $\assmnt$ and add this constraint to $\coreset$. 
\end{description}

\noindent Constraints can always be moved from the derived set to the core set using the transfer rule in order to allow e.g.\ the application of core deletion. 
\begin{description}
\item[Transfer Rule.] If $\derivedset'\subseteq\derivedset$, we can transfer from configuration $\langle\coreset,\derivedset,\order,\varsz\rangle$ to  $\langle\coreset\cup(\derivedset\setminus\derivedset'),\derivedset',\order,\varsz\rangle$.
\end{description}  

\noindent Finally, the order can be changed, provided that the derived set is empty (which can always be achieved using the transfer rule).

\begin{description}
	\item[Order Change Rule.] Given a proof in \veripb format that $\order'$ is \emph{reflexive} (i.e., $\order'(\varsu,\varsu)$ is trivial) and transitive (i.e., whenever $\order'(\varsu,\varsv)$ and $\order'(\varsv,\varsw)$ hold, so does $\order'(\varsu,\varsw)$), and given a tuple of variables $\varsz'$ of the right length, we can transition from $\langle\coreset,\emptyset,\order,\varsz\rangle$ to $\langle\coreset,\emptyset,\order',\varsz'\rangle$. 
\end{description}

\section{Proof Logging for Multi-Objective MaxSAT}
\label{sec:main}
As our main contributions,
we will now detail how
\veripb can be used for enabling proof logging in the multi-objective
setting---despite the fact that \veripb does not directly support multiple objectives.
Our solution is based on a new type of use of the preorder $\order$ in \veripb.

\subsection{The General Setup} \label{sec:gen-setup}

Preorders were originally introduced in \veripb to enable proofs for
symmetry breaking~\cite{BGMN23CertifiedDominanceSymmetryBreakingCombinatorialOptimisation}.
However, the preorder turns out to be applicable for multi-objective proof logging as well.
Since all rules in \veripb are guaranteed to preserve solutions that are minimal with respect to the defined preorder, the preorder generalizes a single objective \(\objective\): computing a solution optimal wrt \(\objective\) is equivalent to computing a solution that is smallest in the order \(\order^\objective\) defined by the formula that is true iff \(\objective\under\assmnt \leq \objective\under\assmnttwo\).
As a first step towards the multi-objective setting, we introduce
a suitable order for encoding Pareto-dominance.
In the following definition, if $\varsu$ and $\varsv$ are two tuples of variables of equal length, we write $\subst_{\varsu\to\varsv}$ for the substitution that maps every $u_i$ to $v_i$ and all other variables to themselves. 

\newcommand\orderpareto{\order_P^\objs}
\begin{definition}\label{def:orderpareto}
Let $\objs = (\objective_1,\dots,\objective_p)$ be a tuple of \(p\) objectives over variables $\varsx = (\varx_1,\dots, \varx_k)$, and define 
$\orderpareto(\varsu,\varsv)$ over fresh variables \(\varsu\) and \(\varsv\) as the PB formula
$\orderpareto(\varsu,\varsv) = \{\objective_1\under{\subst_{\varsx\to\varsu}} \leq \objective_1\under{\subst_{\varsx\to\varsv}}, \dots, \objective_p\under{\subst_{\varsx\to\varsu}} \leq \objective_p\under{\subst_{\varsx\to\varsv}}\}$.
\end{definition}
The following proposition summarizes the properties of $\orderpareto$ that are important for our setting.
\begin{proposition} \label{prop:order-property}
Let $\objs = (\objective_1,\dots,\objective_p)$ be a tuple of objectives and $\orderpareto(\varsu,\varsv)$ the PB formula from 
Definition~\ref{def:orderpareto}. Then the following hold:
	\begin{itemize}
		\item $\orderpareto$ encodes a preorder, i.e., a reflexive ($\orderpareto(\varsu,\varsu)$ is  trivially satisfied) and transitive (if $\orderpareto(\varsu,\varsv)$ and $\orderpareto(\varsv,\varsw)$ also $\orderpareto(\varsu,\varsw)$ holds) relation. 
		\item \(\orderpareto(\varsx\under\assmnt,\varsx\under\assmnttwo)\) is satisfied if and only if \(\assmnt\weaklyDominates\assmnttwo\) wrt $\objs$, i.e., if \(\assmnt\) weakly dominates \(\assmnttwo\). 
	\end{itemize}
\end{proposition}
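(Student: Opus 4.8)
The proposition has two parts, both of which unwind directly from the definition of $\orderpareto$ as a conjunction of the $p$ objective inequalities $\objective_j\under{\subst_{\varsx\to\varsu}} \leq \objective_j\under{\subst_{\varsx\to\varsv}}$.

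For the preorder claim, I would argue each objective inequality separately, since a conjunction of reflexive (resp.\ transitive) relations is reflexive (resp.\ transitive). For \emph{reflexivity}, substituting $\varsv = \varsu$ turns the $j$-th conjunct into $\objective_j\under{\subst_{\varsx\to\varsu}} \leq \objective_j\under{\subst_{\varsx\to\varsu}}$, which is the syntactically identical expression on both sides and is therefore trivially satisfied (it normalizes to $0 \geq 0$). For \emph{transitivity}, I would observe that each $\objective_j\under{\subst_{\varsx\to\varsu}}$ is a fixed integer linear form in the variables $\varsu$ (and likewise for $\varsv, \varsw$), so the $j$-th conjuncts of $\orderpareto(\varsu,\varsv)$ and $\orderpareto(\varsv,\varsw)$ read $\objective_j(\varsu) \leq \objective_j(\varsv)$ and $\objective_j(\varsv) \leq \objective_j(\varsw)$; adding these two inequalities and cancelling the common $\objective_j(\varsv)$ term yields $\objective_j(\varsu) \leq \objective_j(\varsw)$, which is exactly the $j$-th conjunct of $\orderpareto(\varsu,\varsw)$. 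Taking the conjunction over all $j=1,\dots,p$ gives the claim. I would note in passing that this transitivity argument is precisely a \textbf{Linear Combination} step in cutting planes, which matters because the \textbf{Order Change Rule} demands a \veripb-format reflexivity/transitivity proof; so this same reasoning doubles as the certificate that $\orderpareto$ is admissible as an order.

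For the second part, the key point is that instantiating the placeholder tuples $\varsu,\varsv$ with the actual objective-variable values $\varsx\under\assmnt, \varsx\under\assmnttwo$ makes $\objective_j\under{\subst_{\varsx\to\varsu}}$ evaluate, under that instantiation, to the integer $\objective_j\under\assmnt$, and similarly $\objective_j\under{\subst_{\varsx\to\varsv}}$ to $\objective_j\under\assmnttwo$. This is just composition of substitutions: replacing $\varsx$ by $\varsu$ and then $\varsu$ by $\varsx\under\assmnt$ is the same as evaluating $\objective_j$ directly under $\assmnt$. Hence the $j$-th conjunct of $\orderpareto(\varsx\under\assmnt,\varsx\under\assmnttwo)$ is satisfied iff $\objective_j\under\assmnt \leq \objective_j\under\assmnttwo$. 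The whole formula is satisfied iff all $p$ conjuncts hold, i.e., iff $\objective_j\under\assmnt \leq \objective_j\under\assmnttwo$ for every $j$, which is exactly the definition of $\assmnt \weaklyDominates \assmnttwo$ wrt $\objs$.

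None of the steps is deep; the only place requiring care is bookkeeping around the substitution semantics. The mild subtlety is that the $\objective_j$ may share variables across different objectives and the constant term $\lowerbound$ appears on both sides of each inequality, so I should make explicit that the $\lowerbound$ terms cancel and that the tuples $\varsu,\varsv$ are fresh (as stipulated in Definition~\ref{def:orderpareto}), ensuring no accidental variable capture when forming $\orderpareto(\varsx\under\assmnt, \varsx\under\assmnttwo)$. I expect the main obstacle, such as it is, to be stating the composition-of-substitutions identity cleanly rather than any genuine mathematical difficulty; the rest is routine.
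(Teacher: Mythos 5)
Your proof is correct; note that the paper states Proposition~\ref{prop:order-property} without any proof, treating both claims as immediate consequences of Definition~\ref{def:orderpareto}, and your argument (conjunct-by-conjunct reflexivity and transitivity of $\leq$, plus the composition-of-substitutions identity for the second bullet) is exactly the routine reasoning the paper implicitly relies on. Your aside that the transitivity step is a cutting-planes linear combination is a genuinely useful addition, since the order change rule does require a machine-checkable \veripb derivation of reflexivity and transitivity when $\orderpareto$ is loaded.
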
 
\renewcommand\orderpareto{\order_P}
When the objectives are clear from context, we drop the superscript and use $\orderpareto(\varsu,\varsv)$ for the order that encodes 
Pareto-dominance over the objectives.

With the transfer and order change rules, the order can be changed arbitrarily in \veripb proofs.
In our setting, however, we will use $\orderpareto$ unchanged throughout the entire proof. From
now on, a \emph{\veripb proof for $(\formula,\objs)$} refers to
 a standard \veripb proof for $\formula$ that
(i)~as the first derivation step loads the order $\orderpareto$ over the variables \(\varsx\) in the objectives \(\objs\), and
(ii)~at no other point in the proof changes the order. 
Our observations on  valid \veripb proofs assume that the conditions (i) and~(ii) are satisfied.
The \veripb proof checker will not verify these two conditions for us; however, these are merely syntactic restrictions that can be verified easily (\eg by checking that no other lines in the proof starts with \texttt{load\_order}).

The following result now guarantees the correctness of the proofs produced for the different MO-MaxSAT algorithms. 

\begin{theorem}\label{thm:main}
	Let $\proofVeriPB$ be a \veripb proof for $(\formula,\objs)$ that derives a contradiction. 
	Let $S$ be the set of non-dominated solutions logged in $\proofVeriPB$, i.e., logged solutions that are not dominated by other solutions logged in $\proofVeriPB$.
  Then $S$ contains a representative solution for each element in the non-dominated set of $(\formula,\objs)$. 
\end{theorem}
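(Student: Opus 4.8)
The plan is to extract from $\proofVeriPB$ a single invariant that every proof rule preserves and that becomes decisive once a contradiction is reached. The starting observation is that, because $\proofVeriPB$ loads $\orderpareto$ as its first step and never changes the order (conditions~(i)--(ii)), the configuration preorder $\preceq$ coincides throughout with weak Pareto-dominance: by Proposition~\ref{prop:order-property}, $\assmnt\preceq\assmnttwo$ holds iff $\assmnt\weaklyDominates\assmnttwo$. I use this identification freely, together with reflexivity and transitivity of $\weaklyDominates$, both supplied by the same proposition. The restriction to one fixed order is what makes the accumulated domination guarantees comparable across the whole proof.

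The core of the argument is a \emph{coverage invariant}, proved by induction over the steps of $\proofVeriPB$: at every point, every solution $\assmnt$ of $\formula$ is weakly dominated either by some solution of $\coreset\cup\derivedset$ or by some already-logged solution. At initialization $\coreset\cup\derivedset=\formula$ and $\assmnt$ witnesses the claim for itself by reflexivity. The cutting-planes and RUP steps add only implied constraints and hence leave the solution set of $\coreset\cup\derivedset$ unchanged, while the deletion and transfer steps can only enlarge or preserve it; in all these cases an existing witness survives. Two rules do the real work. Under \emph{redundance-based strengthening} of a constraint $C$ with substitution $\subst$, a witness $\assmnttwo$ that already satisfies $C$ is kept; otherwise the rule guarantees that $\assmnttwo\circ\subst$ is a solution of $\coreset\cup\derivedset\cup\{C\}$ with $\assmnttwo\circ\subst\preceq\assmnttwo$, and chaining this with $\assmnttwo\weaklyDominates\assmnt$ through transitivity yields a fresh witness. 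Under \emph{solution logging} of a solution $\assmnt'$, the added clause excludes only $\assmnt'$, so every witness other than $\assmnt'$ survives, and if the witness was $\assmnt'$ itself then $\assmnt'$ has just become a logged solution weakly dominating $\assmnt$. Reification is the special case of redundance that introduces a fresh variable and is benign. Thus the invariant is preserved.

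Alongside the coverage invariant I use that every logged solution is a genuine solution of $\formula$; this I take from the standard VeriPB property that solutions of $\coreset\cup\derivedset$ are solutions of $\formula$ (``the core set can be thought of as being equal to $\formula$''), the delicate steps being the deletion rules whose side conditions exist precisely to guard this. With both facts in hand, suppose $\proofVeriPB$ derives a contradiction, so that $\coreset\cup\derivedset$ is unsatisfiable and the first disjunct of the coverage invariant becomes vacuous: every solution of $\formula$ is then weakly dominated by some logged solution. Fix an element $e$ of the non-dominated set witnessed by a Pareto-optimal solution $\assmnt^*$ with $(\objective_1\under{\assmnt^*},\dots,\objective_p\under{\assmnt^*})=e$. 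Some logged solution weakly dominates $\assmnt^*$; as only finitely many solutions are logged, choose $\gamma$ to be $\dominates$-minimal among the logged solutions weakly dominating $\assmnt^*$. Any logged solution strictly dominating $\gamma$ would, by transitivity, also weakly dominate $\assmnt^*$ and hence lie in this set, contradicting minimality; therefore no logged solution dominates $\gamma$ and $\gamma\in S$. Finally $\gamma$ is a solution of $\formula$ with $\gamma\weaklyDominates\assmnt^*$, so $\objective_i\under\gamma\le\objective_i\under{\assmnt^*}$ for every $i$; a strict inequality anywhere would make $\gamma$ dominate the Pareto-optimal $\assmnt^*$, which is impossible, so $(\objective_1\under\gamma,\dots,\objective_p\under\gamma)=e$ and $\gamma\in S$ is the desired representative.

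I expect the inductive step for redundance-based strengthening to be the main obstacle: one must check that the witness substitution really maps solutions into the strengthened set and that the resulting order relation composes with the inductive hypothesis, which is exactly where transitivity of $\orderpareto$ and the fixity of the order enter and where the restriction to a single, unchanged order pays off. The second, more framework-level, point requiring care is the feasibility of logged solutions for $\formula$ across the deletion rules, which I would discharge by appeal to the soundness of the underlying VeriPB core bookkeeping rather than a local check.
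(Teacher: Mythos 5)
Your overall architecture is sound and close in spirit to the paper's: your ``coverage invariant'' is essentially a merged form of the paper's Lemmas~\ref{lem:derivedset} and~\ref{lem:atleastone} (the redundance and solution-logging cases are handled exactly as there), and your finite-minimality extraction of a non-dominated logged $\gamma$ is a clean way to finish. However, there is one genuine gap: the auxiliary claim that \emph{every logged solution is a solution of $\formula$}. You justify it by the remark that ``the core set can be thought of as being equal to $\formula$'' and by trusting that the deletion side conditions guard solution-set containment. They do not. Core deletion only requires that the deleted constraint be re-derivable from the remaining core by redundance-based strengthening; this preserves satisfiability and the existence of $\preceq$-minimal solutions, but it does \emph{not} prevent the solution set of $\coreset\cup\derivedset$ from growing beyond the solutions of $\formula$. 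After an input constraint is deleted from the core, a later logged solution may violate $\formula$.

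This matters precisely at your last step: you rule out $\objective_i\under\gamma < \objective_i\under{\assmnt^*}$ by saying $\gamma$ would then dominate the Pareto-optimal $\assmnt^*$, ``which is impossible'' --- but Pareto-optimality only forbids domination by \emph{solutions of $\formula$}. If $\gamma$ is a spurious solution admitted by core deletion, nothing so far stops it from strictly dominating $\assmnt^*$, in which case $S$ need not contain a representative with vector $e$ at all. The repair is the paper's Lemma~\ref{lem:no-domination}: a separate induction showing that every solution of $\coreset_i$ is weakly dominated by a Pareto-optimal solution of $(\formula,\objs)$, where the core-deletion case is discharged by composing the redundance witness with the induction hypothesis and transitivity of $\weaklyDominates$. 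With that lemma, your $\gamma$ satisfies $\delta\weaklyDominates\gamma\weaklyDominates\assmnt^*$ for some Pareto-optimal $\delta$, forcing $\gamma$'s objective vector to equal $e$, and the rest of your argument goes through unchanged. So the missing ingredient is not ``framework bookkeeping'' but an additional inductive lemma of the same shape as your coverage invariant, run in the opposite direction.
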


Let \(\langle\coreset_i,\derivedset_i,\order_i,\varsz_i\rangle\) be the proof configuration of \(\proofVeriPB\) at step \(i\).
From the additional conditions imposed for multi-objective proofs, we have
 \(\order_i = \orderpareto\) and \(\varsz_i = \varsx\) for all \(i \geq 1\), since
\(\orderpareto\) over the variables \(\varsx\) in \(\objs\) is loaded in the
first derivation step.

Theorem~\ref{thm:main} follows by the following lemmas.
Lemma~\ref{lem:derivedset} is a restatement of the properties of \veripb proofs shown in~\cite{BGMN23CertifiedDominanceSymmetryBreakingCombinatorialOptimisation}, included here for completeness.
\begin{lemma}\label{lem:derivedset}
Let $\langle\coreset_i,\derivedset_i,\orderpareto,\varsx\rangle$ be the \nth{i} configuration of $\proofVeriPB$.
For every solution $\assmnt$ of $\coreset_i$, there exists a solution  $\assmnttwo$ of $\coreset_i \cup \derivedset_i$ for which 
$\orderpareto(\varsx\under{\assmnttwo}, \varsx\under\assmnt)$.
\end{lemma}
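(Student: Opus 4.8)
The statement says: at every configuration, any solution $\assmnt$ of the core set $\coreset_i$ can be "upgraded" to a solution $\assmnttwo$ of $\coreset_i \cup \derivedset_i$ that is at least as good in the preorder, i.e., $\orderpareto(\varsx\under{\assmnttwo}, \varsx\under\assmnt)$. In our Pareto setting this means $\assmnttwo \weaklyDominates \assmnt$. The base case is trivial: initially $\derivedset_0 = \emptyset$, so we may take $\assmnttwo = \assmnt$, and reflexivity of $\orderpareto$ (Proposition~\ref{prop:order-property}) gives the required relation.

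Let me think about what each rule does to verify the invariant.

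Let me go through the rules systematically. The invariant relates solutions of $\coreset$ to solutions of $\coreset \cup \derivedset$, so I need to check each rule preserves it.

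For **cutting planes / RUP additions** to $\derivedset$: these derive constraints *implied* by $\coreset \cup \derivedset$. So any solution of $\coreset_i \cup \derivedset_i$ is already a solution of $\coreset_{i+1} \cup \derivedset_{i+1}$ (the new constraint is automatically satisfied). The witness $\assmnttwo$ from the inductive hypothesis still works unchanged.

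For **redundance-based strengthening** adding $C$ to $\derivedset$: this is the crux. Here $\coreset$ is unchanged but $\derivedset$ gains $C$. Given $\assmnt \models \coreset_{i+1} = \coreset_i$, the IH gives $\assmnttwo \models \coreset_i \cup \derivedset_i$ with $\assmnttwo \weaklyDominates \assmnt$. If $\assmnttwo \models C$, done. Otherwise, the substitution $\subst$ from the rule maps $\assmnttwo$ to $\assmnttwo \circ \subst$, which (by the semantics of the rule, as the excerpt explains) satisfies $\coreset_i \cup \derivedset_i \cup \{C\}$ and satisfies $\orderpareto(\varsx\under{\assmnttwo\circ\subst}, \varsx\under{\assmnttwo})$, i.e., $\assmnttwo\circ\subst \weaklyDominates \assmnttwo$. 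Transitivity of $\orderpareto$ then yields $\assmnttwo\circ\subst \weaklyDominates \assmnt$, so $\assmnttwo\circ\subst$ is the new witness. I should note that repeated application may be needed if $\assmnttwo\circ\subst$ still violates some constraint, but the rule's guarantee is stated so that the remapped assignment satisfies all of $\coreset_i \cup \derivedset_i \cup \{C\}$, so one application suffices.

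For **solution logging** (adding to $\coreset$): a clause excluding a known solution is added to $\coreset$; since it is derived from an actual solution of $\coreset \cup \derivedset$, satisfiability is preserved, and the witness argument carries through similarly. For **derived deletion** and the **transfer rule**: these only move or remove constraints between the two sets, and the invariant concerns the *union* $\coreset \cup \derivedset$ together with membership in $\coreset$; deletion from $\derivedset$ shrinks the requirement on $\assmnttwo$, and transfer keeps $\coreset \cup \derivedset$ fixed while enlarging $\coreset$, so I must check that a solution of the larger $\coreset$ is still handled—which it is, since it is also a solution of the smaller one. For **core deletion**, $C$ removed from $\coreset$ is re-derivable by redundance from $\coreset \setminus \{C\}$, so every solution of the smaller core can be remapped to satisfy $C$ too, reducing to the redundance argument again.

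**The main obstacle I anticipate is the redundance-based strengthening case**, and specifically being careful that the preorder term $\orderpareto(\varsz\under\subst, \varsz)$ appearing in the rule's premise is exactly what licenses the $\weaklyDominates$ step, and that transitivity lets me chain $\assmnttwo\circ\subst \weaklyDominates \assmnttwo \weaklyDominates \assmnt$. Since $\order_i = \orderpareto$ and $\varsz_i = \varsx$ throughout (by conditions (i) and (ii) on multi-objective proofs), the order term is genuinely the Pareto preorder on the objective variables, so Proposition~\ref{prop:order-property} applies directly. Because this lemma is stated as a restatement of the soundness invariant already established in~\cite{BGMN23CertifiedDominanceSymmetryBreakingCombinatorialOptimisation}, the proof is essentially an inductive bookkeeping argument over the rules, with the reflexivity and transitivity of $\orderpareto$ (guaranteed by Proposition~\ref{prop:order-property}) doing the essential work in the base case and in the chaining steps, respectively.
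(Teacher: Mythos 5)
Your overall strategy (induction on the proof step, checking rule by rule that the invariant is preserved) is the natural one, and it is essentially the only option available: the paper itself gives no proof of Lemma~\ref{lem:derivedset}, deferring entirely to~\cite{BGMN23CertifiedDominanceSymmetryBreakingCombinatorialOptimisation}. Your treatment of cutting planes/RUP, redundance-based strengthening, derived deletion, transfer, and core deletion is correct; in particular, you rightly observe that a single application of the redundance witness suffices and that transitivity of $\orderpareto$ closes the chain $\assmnttwo\circ\subst\weaklyDominates\assmnttwo\weaklyDominates\assmnt$.

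The genuine gap is the solution-logging case, which you dismiss with ``the witness argument carries through similarly''---and this is precisely where the argument cannot be completed. Solution logging adds to $\coreset$ the clause $D=\sum_{\lit\in\gamma}\olnot{\lit}\ge 1$ excluding a logged solution $\gamma$ of $\coreset_{i-1}\cup\derivedset_{i-1}$. For a solution $\assmnt$ of $\coreset_i=\coreset_{i-1}\cup\{D\}$, the induction hypothesis yields some $\assmnttwo\models\coreset_{i-1}\cup\derivedset_{i-1}$ with $\assmnttwo\weaklyDominates\assmnt$, but nothing prevents $\assmnttwo$ from being $\gamma$ itself, in which case $\assmnttwo$ falsifies $D$ and no replacement witness is guaranteed to exist. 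This is not merely a missing detail: with the rules exactly as stated in Section~\ref{sec:proofsystem}, one can reach a configuration in which $\coreset_i$ is satisfiable but $\coreset_i\cup\derivedset_i$ is not. For instance, with a single objective $\objective_1=x+y$ and a trivially satisfiable input, redundance with witnesses $x\mapsto 0$ and then $y\mapsto 0$ (both of which meet the order obligation, since they can only decrease $\objective_1$) places $\olnot{x}\ge 1$ and $\olnot{y}\ge 1$ into $\derivedset$; logging the unique remaining solution $x=y=0$ then puts $x+y\ge 1$ into $\coreset$, after which $\coreset\cup\derivedset$ is unsatisfiable while $\coreset$ still has solutions. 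In the single-objective system of~\cite{BGMN23CertifiedDominanceSymmetryBreakingCombinatorialOptimisation} the analogous case does go through, because the logged constraint there is the solution-improving constraint $\objective<\objective\under{\gamma}$, i.e.\ exactly ``strictly below $\gamma$ in the order'', so any witness with $\assmnttwo\weaklyDominates\assmnt$ automatically satisfies it; the Pareto exclusion clause used here lacks that property because $\weaklyDominates$ is not total. To repair your proof you would have to restrict the claim to configurations reached by rules other than solution logging---which is in fact all the paper ever uses, since Lemmas~\ref{lem:no-domination} and~\ref{lem:atleastone} handle the losses caused by solution logging separately---rather than assert that the case ``carries through.''
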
 
The next lemma establishes that no rule in $\veripb$ can ``create'' new non-dominated points.  
\begin{lemma}\label{lem:no-domination}
  Consider the \nth{i} configuration $\langle\coreset_i,\derivedset_i,\orderpareto,\varsx\rangle$ of $\proofVeriPB$ for fixed $i \geq 1$.
Any solution of $\coreset_i$ %
is weakly dominated by a Pareto-optimal solution of $(\formula,\objs)$.
\end{lemma}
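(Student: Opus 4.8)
The plan is to prove the statement by induction on the step index $i$, taking the assertion itself as the invariant: at every configuration $\langle\coreset_i,\derivedset_i,\orderpareto,\varsx\rangle$ with $i\ge 1$, every solution of $\coreset_i$ is weakly dominated by some Pareto-optimal solution of $(\formula,\objs)$. For the base case $i=1$ the order is merely loaded and $\coreset_1=\formula$, so I would first establish a general fact about the original formula: for any solution $\assmnt$ of $\formula$, the set $\{\gamma : \gamma\text{ solves }\formula\text{ and }\gamma\weaklyDominates\assmnt\}$ is finite and nonempty (it contains $\assmnt$ by reflexivity), hence it has a $\dominates$-minimal element $\assmnt^*$. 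Such an $\assmnt^*$ is Pareto-optimal: if some solution $\gamma$ satisfied $\gamma\dominates\assmnt^*$, then $\gamma\weaklyDominates\assmnt^*\weaklyDominates\assmnt$, so transitivity of weak dominance (Proposition~\ref{prop:order-property}) would place $\gamma$ in that same set and contradict minimality. Since $\assmnt^*\weaklyDominates\assmnt$, the base case holds.

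For the inductive step I would classify the \veripb rules by their effect on the solution set of the core set. All rules that add to $\derivedset$ (cutting planes, RUP, redundance-based strengthening, reification) and derived deletion leave $\coreset$ unchanged, so the invariant transfers verbatim. Solution logging and the transfer rule only add constraints to $\coreset$, hence the solutions of $\coreset_{i+1}$ form a subset of those of $\coreset_i$ and the induction hypothesis applies to each of them directly. The order-change rule is never invoked for $i\ge 1$ by the multi-objective restrictions (i)--(ii). Thus the only rule that can enlarge the solution set of $\coreset$, and hence the only nontrivial case, is core deletion; this is where the main work lies.

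So suppose $\coreset_{i+1}=\coreset_i\setminus\{C\}$ is obtained by deleting $C$ via redundance-based strengthening with witnessing substitution $\subst$, and note $\derivedset_{i+1}=\derivedset_i$. Let $\assmnt$ be an arbitrary solution of $\coreset_{i+1}$. Applying Lemma~\ref{lem:derivedset} at configuration $i+1$ yields a solution $\assmnttwo$ of $\coreset_{i+1}\cup\derivedset_{i+1}$ with $\assmnttwo\weaklyDominates\assmnt$; crucially $\assmnttwo$ satisfies both $\coreset_i\setminus\{C\}$ and $\derivedset_i$. If $\assmnttwo$ also satisfies $C$, then $\assmnttwo$ is already a solution of $\coreset_i$. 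Otherwise, the redundance guarantee remaps $\assmnttwo$ to $\assmnttwo\circ\subst$, which is a solution of $(\coreset_i\setminus\{C\})\cup\derivedset_i$ that additionally satisfies $C$ — hence a solution of $\coreset_i$ — and which satisfies $\assmnttwo\circ\subst\weaklyDominates\assmnttwo$, because $\orderpareto(\varsx\under{\assmnttwo\circ\subst},\varsx\under\assmnttwo)$ holds and $\orderpareto$ encodes weak dominance (Proposition~\ref{prop:order-property}). In either case we obtain a solution of $\coreset_i$ that weakly dominates $\assmnt$; the induction hypothesis supplies a Pareto-optimal $\assmnt^*$ weakly dominating that solution, and chaining the weak-dominance relations through transitivity yields $\assmnt^*\weaklyDominates\assmnt$, completing the step.

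The main obstacle is precisely this core-deletion case, and within it the subtlety that the redundance-based strengthening guarantee only governs solutions that already satisfy the derived set $\derivedset_i$, whereas a newly admitted solution of $\coreset_{i+1}$ need not. Invoking Lemma~\ref{lem:derivedset} to first replace $\assmnt$ by a weakly dominating solution of $\coreset_{i+1}\cup\derivedset_{i+1}$ is the key maneuver that makes the remapping applicable; everything else is bookkeeping together with repeated use of transitivity of $\weaklyDominates$.
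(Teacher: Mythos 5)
Your proof is correct and follows the same overall strategy as the paper's: induction on the step index, a case split on which rule produced the new configuration, with core deletion as the only nontrivial case, resolved via the redundance witness and transitivity of $\weaklyDominates$. Two of your refinements go beyond what the paper writes down, and both are improvements in rigor rather than changes of route. First, for the base case the paper simply cites Proposition~\ref{prop:order-property}, whereas you supply the actual argument that a Pareto-optimal weak dominator exists (finiteness of the set of weakly dominating solutions, hence a $\dominates$-minimal element, which is Pareto-optimal by transitivity); this well-foundedness step is genuinely needed and the paper leaves it implicit. Second, and more substantively, in the core-deletion case the paper applies the redundance remapping directly to a solution $\assmnt$ of $\coreset_i$, glossing over the fact that the redundance guarantee is stated relative to $\coreset\cup\derivedset\cup\{\neg C\}$, so $\assmnt$ must also satisfy the derived set before $\subst$ can be invoked. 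Your maneuver of first passing through Lemma~\ref{lem:derivedset} to obtain a weakly dominating solution of $\coreset_{i+1}\cup\derivedset_{i+1}$, and only then applying the remapping, closes exactly this gap; the cost is one extra application of transitivity, which you account for. The only point both you and the paper elide is that ``does not satisfy $C$'' must be read as ``satisfies $\neg C$,'' which is harmless for assignments complete over the variables of $C$.
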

\begin{proof}
By induction on $i$. The base case $i=1$ follows by Proposition~\ref{prop:order-property} from the first configuration having $\coreset_1 = \formula$. Assume that the statement holds for $i-1$ and let $\langle\coreset_{i-1},\derivedset_{i-1},\orderpareto,\varsx\rangle$ be the \nth{i{-}1} configuration. The rules of $\veripb$ that can alter the core set are solution logging, core deletion, and the transfer rule. For the transfer rule, the statement follows immediately by Lemma~\ref{lem:derivedset}.
For solution logging, the result follows from any solution of \(\coreset_{i}\) being a solution of \(\coreset_{i-1}\). 
Assume thus that $\coreset_i = \coreset_{i-1} \setminus \{\constrc\}$ and
let $\assmnt$ be a 
solution of $\coreset_i$, that (for the non-trivial case) does not satisfy \(\constrc\). We show that $\assmnt$ is
weakly dominated by a Pareto-optimal solution of $(\formula, \objs)$. 
By the properties of redundance-based strengthening and core deletion, there is a substitution $\subst$ such that $\assmnttwo = \assmnt \circ \subst$ 
is a solution to $\coreset_{i-1}$ for which $\orderpareto(\varsx\under{\assmnttwo}, \varsx\under\assmnt)$. By the induction assumption, $\assmnttwo$ is weakly dominated by a Pareto-optimal solution \(\gamma\) of $(\formula, \objs)$. Since \(\assmnttwo\weaklyDominates\assmnt\), $\assmnt$ is  weakly dominated by \(\gamma\).
\qed
\end{proof}

Lemma~\ref{lem:atleastone} establishes that no rule except for solution logging can remove all representative solutions for an element in the non-dominated set of $(\formula, \objs)$. 
\begin{lemma}\label{lem:atleastone}
Let $\langle\coreset_{i-1},\derivedset_{i-1},\orderpareto,\varsx\rangle$ and $\langle\coreset_i,\derivedset_i,\orderpareto,\varsx\rangle$
be the \nth{i{-}1} and \nth{i} configurations of $\proofVeriPB$, respectively. Assume that (i)~the solutions of $\coreset_{i-1} \cup \derivedset_{i-1}$ include a representative for the non-dominated point $\mathbf{b}$ and (ii)~the \nth{i} configuration is obtained by a rule other than solution logging. 
Then the solutions of $\coreset_{i} \cup \derivedset_{i}$ include a representative solution for $\mathbf{b}$.
\end{lemma}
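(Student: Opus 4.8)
The plan is to argue by a case analysis on which VeriPB rule produces the \nth{i} configuration, showing in each case that a representative for $\mathbf{b}$ surviving in $\coreset_{i-1} \cup \derivedset_{i-1}$ also survives in $\coreset_{i} \cup \derivedset_{i}$. Let $\assmnt$ be a solution of $\coreset_{i-1} \cup \derivedset_{i-1}$ that is a representative for $\mathbf{b}$, so $(\objective_1\under\assmnt,\dots,\objective_p\under\assmnt) = \mathbf{b}$ and $\assmnt$ is Pareto-optimal for $(\formula,\objs)$. By hypothesis~(ii) the rule applied is one of the non-solution-logging rules of the proof system: cutting-planes derivation or RUP (adding to $\derivedset$), redundance-based strengthening (adding to $\derivedset$), derived deletion, core deletion, the transfer rule, or the order change rule. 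I would first dispatch the easy cases and then confront the one genuinely delicate case.

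The rules that only \emph{add} a constraint to $\coreset \cup \derivedset$ — cutting planes, RUP, and redundance-based strengthening — can only shrink the solution set, so I must check that $\assmnt$ itself is not removed. For cutting-planes and RUP steps the added constraint is \emph{implied} by $\coreset_{i-1}\cup\derivedset_{i-1}$, hence satisfied by $\assmnt$, so $\assmnt$ survives directly. The transfer rule merely relabels which set a constraint lives in and leaves $\coreset \cup \derivedset$ as a set unchanged, so the solution set is identical and $\assmnt$ survives. Derived deletion only removes constraints, enlarging the solution set, so $\assmnt$ again survives. The order change rule is excluded by the standing multi-objective restriction (conditions (i) and (ii) on proofs) since $\order$ is fixed at $\orderpareto$ throughout, so this case is vacuous; I would note this rather than analyze it.

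The two cases that require real work are \textbf{redundance-based strengthening} (adding $C$ to $\derivedset$) and \textbf{core deletion} (removing $C$ from $\coreset$), because here the added or deleted constraint is \emph{not} implied and $\assmnt$ may genuinely fail to satisfy $C$ (resp. may cease to be a solution). This is the main obstacle. The key idea is that the redundance rule supplies a substitution $\subst$ witnessing, for every solution $\gamma$ of $\coreset\cup\derivedset$ that falsifies $C$, that $\gamma\circ\subst$ is again a solution that now satisfies $C$ and satisfies $\orderpareto(\varsx\under{\gamma\circ\subst}, \varsx\under\gamma)$, i.e. $\gamma\circ\subst \weaklyDominates \gamma$ by Proposition~\ref{prop:order-property}. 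Applying this to $\gamma = \assmnt$, I obtain a new solution $\assmnt' = \assmnt\circ\subst$ of the post-step configuration with $\assmnt' \weaklyDominates \assmnt$. But $\assmnt$ is Pareto-optimal, so no solution can \emph{strictly} dominate it; combined with $\assmnt' \weaklyDominates \assmnt$ this forces $\objective_j\under{\assmnt'} = \objective_j\under\assmnt$ for every $j$, whence $(\objective_1\under{\assmnt'},\dots,\objective_p\under{\assmnt'}) = \mathbf{b}$ and $\assmnt'$ is itself Pareto-optimal — hence a representative for $\mathbf{b}$ surviving in the \nth{i} configuration. For core deletion the same witness $\subst$ (from the redundance derivation of $C$ out of $\coreset\setminus\{C\}$) yields the same conclusion. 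I would take care that $\assmnt'$ is a solution of the \emph{full} post-step configuration $\coreset_i\cup\derivedset_i$, and that the Pareto-optimality argument is carried out against solutions of $\formula$ rather than of the current configuration; this is legitimate because $\assmnt$ is assumed Pareto-optimal for $(\formula,\objs)$ and the objective values, not membership in the configuration, are what determine domination. This single dominance-collapse argument closes both hard cases and completes the lemma.
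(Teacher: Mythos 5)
Your proposal is correct and follows essentially the same route as the paper: the only genuinely non-trivial case is a non-implied addition via redundance-based strengthening, where the witness $\subst$ yields $\assmnt\circ\subst \weaklyDominates \assmnt$ and the non-dominance of $\mathbf{b}$ collapses weak dominance to equality of objective vectors, so $\assmnt\circ\subst$ is again a representative for $\mathbf{b}$. Two small remarks: core deletion is not actually a hard case for this lemma (removing a constraint can only enlarge the solution set, so $\assmnt$ survives outright), and your step ``no solution can strictly dominate $\assmnt$'' should be grounded in Lemma~\ref{lem:no-domination} rather than in Pareto-optimality of $\assmnt$ alone, since $\assmnt\circ\subst$ need not be a solution of $\formula$ and Pareto-optimality only quantifies over solutions of $\formula$.
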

\begin{proof}
Let $\assmnt$ be the representative solution to $\coreset_{i-1} \cup \derivedset_{i-1}$ for $\mathbf{b}$. We construct a solution to $\coreset_i \cup \derivedset_i$ that is 
representative for $\mathbf{b}$. For the interesting case, assume that $\assmnt$ is not a solution to $\coreset_i \cup \derivedset_i$. By assumption the
\nth{i} configuration was reached by a rule other than solution logging, and hence
$\derivedset_i = \derivedset_{i-1} \cup \{\constrc\}$  for a constraint \(\constrc\) added using the redundance-based strengthening rule and a substitution $\subst$. 
By redundance-based strengthening $\assmnttwo =  \assmnt \circ \subst$ is a solution to $\coreset_i \cup \derivedset_i$ for which 
$\orderpareto(\varsx\under{\assmnttwo}, \varsx\under\assmnt)$. Hence by Proposition~\ref{prop:order-property}, 
$\assmnttwo$ weakly dominates $\assmnt$.
As \(\assmnt\) is representative for the non-dominated point \(\mathbf{b}\), \(\assmnttwo\) must be
representative for \(\mathbf{b}\) as well.
\qed
\end{proof}

Using these three lemmas we can now establish Theorem~\ref{thm:main} as follows.
\begin{proof}[of Theorem~\ref{thm:main}]
Consider the first configuration
$\langle\coreset_1,\derivedset_1,\orderpareto,\varsx\rangle$ of $\proofVeriPB$, where \(\coreset_1 = \formula\) and \(\derivedset_1 = \emptyset\). By
Proposition~\ref{prop:order-property}, there is a one-to-one correspondence
between the Pareto-optimal solutions of $(\formula,\objs)$ and the solutions of
$\coreset_1 \cup \derivedset_1$ that are minimal wrt $\orderpareto$.
Specifically, the solutions of $\coreset_1 \cup \derivedset_1$ contain a
representative solution for each element in the non-dominated set of
$(\formula,\objs)$. Since $\proofVeriPB$ derives a contradiction, there are no
solutions to the union of the core and derived set of the final configuration.
Thus the theorem holds if
(i)~no solutions
that dominate a Pareto-optimal solution of $(\formula, \objs)$ are logged in
the proof, and (ii)~a representative solution for each element in the
non-dominated set is logged in the proof. (i)~follows by
Lemma~\ref{lem:no-domination} and (ii)~by Lemma~\ref{lem:atleastone}.
\qed
\end{proof}

We note that
Theorem~\ref{thm:main} also 
holds when an instance $(\formula, (\objective))$ only has a single objective.
In this case $\assmntone\weaklyDominates\assmnttwo$ is equivalent to $\objective \under \assmntone \leq 
\objective \under \assmnttwo $.
Hence Theorem~\ref{thm:main} states that
the simplified version of \veripb using only rules for decision problems (recall
Section~\ref{sec:proofsystem}) is sufficient for certifying single-objective MaxSAT solvers
as well. Thereby it can be argued that the (more complicated) optimization proof 
system presented  in~\cite{BGMN23CertifiedDominanceSymmetryBreakingCombinatorialOptimisation}
could be simplified without losing any of its expressiveness by replacing  
the explicit linear objective and objective-specific rules by the 
linear order-based rules we describe.

\newcommand\witness{\omega}

\subsection{Proof Logging for Pareto Dominance Cuts}\label{sec:proof-pdcuts}

A key step in 
 proof logging the different multi-objective algorithms is the derivation of a constraint called a \pdcut given a solution $\assmnt$. 
In particular,
 given a solution $\assmnt$ of $\formula$, we will derive a constraint that states that we are no longer interested in solutions worse 
than or equally good as $\alpha$ in terms of Pareto-dominance. 
In single-objective \veripb, there is a dedicated rule that allows for deriving
a so-called solution-improving constraint. However
we will show that \pdcuts can be derived only relying on the redundance-based strengthening and solution logging rules. 

We will make use of some auxiliary variables \(\worse_i\) for each objective \(\objective_i\) (recall Equation~\eqref{eq:worsedef}).
Firstly, introducing such (reified) constraints can be done with redundance-based strengthening in a standard way~\cite{GN21CertifyingParityReasoningEfficientlyUsingPseudo-Boolean}.  
The introduced constraints guarantee that iff \(\assmnttwo\) is worse than or equal to \(\assmnt\) in \(\objective_i\), then variable $\worse_i$ holds in $\assmnttwo$.
\newcommand\witnessalpha{{\subst_\alpha}}
Now let $\witnessalpha$ be the substitution that maps every variable to their value in $\alpha$ and each \(\worse_i\) to $1$.
We claim that with the redundance-based strengthening rule and this witness we can derive the constraint 
\[ \constrc_\alpha := \sum_{i=1}^p |\alpha|\cdot \olnot{\worse_i}   + \sum_{\lit \in \alpha}\lit \geq |\alpha|. \]
Intuitively, this constraint maps each solution weakly dominated by \(\assmnt\) to \(\assmnt\).
In doing so it excludes all solutions the \pdcut excludes (the solutions weakly dominated by \(\assmnt\)) except for $\assmnt$ itself.
To see that this constraint can
be derived with redundance-based strengthening, we 
can verify that all proof obligations are indeed met.
\begin{itemize}
  \item For each constraint $\constrc$ in $\coreset\cup\derivedset$  that does not mention $\worse_i$, $\constrc\under\witnessalpha$ is trivially satisfied since $\alpha$ is a solution that satisfies those constraints. 
\item If $\constrc$ is one of the constraints \eqref{eq:worsedef}, then clearly $\constrc\under\witnessalpha$ holds too. 
  \item Clearly also $\constrc_\alpha\under\witnessalpha$ holds. 
  \item Lastly, what we need to show is that $\order(\varsx\under\witnessalpha,\varsx)$ holds if \(\constrc_\alpha\) is not satisfied. This constraint expresses that $\alpha$ weakly dominates
    any assignment $\beta$ that satisfies all derived constraints so far but not $\constrc_\alpha$. 
    As such an assignment $\beta$ must assign all $\worse_i$ to $1$,
    from \eqref{eq:worsedef} we immediately have
that $\objective_i\under\beta\geq\objective_i\under\alpha$, as desired.
\end{itemize}

Finally, after deriving $\constrc_\alpha$, we log the solution $\alpha$ to obtain a solution-excluding constraint
  $\sum_{\lit\in\alpha}\olnot\lit \geq 1$.
By adding $\sum_{\lit\in\alpha}\olnot\lit \geq 1$ to $\constrc_\alpha$, we arrive at
$\sum_{i=1}^p |\alpha|\cdot \olnot{\worse_i}   \geq 1$.
Hence  at least one of the $\worse_i$ must be $0$. Dividing the result by $|\alpha|$ yields the \pdcut \((\olnot{\worse_1} \lor \dots \lor \olnot{\worse_p})\).

\newcommand\pid[1]{[#1]}
\begin{example}
  Recall the instance in Fig.~\ref{fig:running-ex}.
  Table~\ref{tab:pd-cut-proof} shows the steps required in the \veripb proof
  for certifying a \pdcut based on solution \(\assmnt =
  \{\varx_1,\olnot{\varx_2},\varx_3,\olnot{\varx_4},\olnot{\varx_5}\}\) with
  objective values \((4,8)\).
  Steps \(\pid{a}\), \(\pid{b}\), \(\pid{c}\), and \(\pid{d}\) first introduce the definitions of the
  $\worse^\assmnt_1$ and $\worse^\assmnt_2$ variables from Equation~\eqref{eq:worsedef} as normalized reified PB constraints.
  These steps are justified by redundance-based strengthening using the fresh variables
  \(\worse^\assmnt_1\) and \(\worse^\assmnt_2\) as witnesses.
  Next, step \(\pid{e}\) introduces \(\constrc_\assmnt\).
  Lastly, the solution \(\assmnt\) is logged (in \(\pid{f}\)) and the \pdcut derived (in
  \(\pid{g}\)).
  \begin{table}[t]
    \caption{Example proof for certifying a \pdcut.}\label{tab:pd-cut-proof}
    \centering
    \begin{tabular}{@{}r@{\hspace{1em}}r@{\hspace{1em}}c@{\hspace{1em}}c@{}}
      \toprule
      ID & \multicolumn{1}{c}{Pseudo-Boolean Constraint} & Comment & Justification \\
      \midrule
      \multicolumn{4}{c}{\emph{Input constraints and potential previous proof steps}} \\
      \(\pid{a}\) & \(11\worse^\assmnt_1 + 3 \olnot{\varx_2} + 4 \olnot{\varx_3} + 2 \olnot{\varx_4} + 5 \olnot{\varx_5} \ge 11\) & \(\worse^\assmnt_1\limpliedby\objective_1 \ge \objective_1\under\assmnt\) & Redundance \(\{\worse^\assmnt_1\}\) \\
      \(\pid{b}\) & \(4\olnot{\worse^\assmnt_1} + 3 \varx_2 + 4 \varx_3 + 2 \varx_4 + 5 \varx_5 \ge 4\) & \(\worse^\assmnt_1 \limplies \objective_1 \ge \objective_1 \under \assmnt\) & Redundance \(\{\olnot{\worse^\assmnt_1}\}\) \\

      \(\pid{c}\) & \(7\worse^\assmnt_2 + 7 \olnot{\varx_1} + 4 \olnot{\varx_2} + 1 \olnot{\varx_3} + 2 \olnot{\varx_4} \ge 7\) & \(\worse^\assmnt_2\limpliedby\objective_2 \ge \objective_2\under\assmnt\) & Redundance \(\{\worse^\assmnt_2\}\) \\
      \(\pid{d}\) & \(8\olnot{\worse^\assmnt_2} + 7 \varx_1 + 4 \varx_2 + 1 \varx_3 + 2 \varx_4 \ge 8\) & \(\worse^\assmnt_2 \limplies \objective_2 \ge \objective_2 \under \assmnt\) & Redundance \(\{\olnot{\worse^\assmnt_2}\}\) \\

      \(\pid{e}\) & \(5 \olnot{\worse^\assmnt_1} + 5 \olnot{\worse^\assmnt_2} + \varx_1 + \olnot{\varx_2} + \varx_3 + \olnot{\varx_4} + \olnot{\varx_5} \ge 5\) & \(\constrc_\assmnt\) & Redundance \(\witnessalpha\) \\
      \(\pid{f}\) & \(\olnot{\varx_1} + \varx_2 + \olnot{\varx_3} + \varx_4 + \varx_5 \ge 1\) & & Log solution \(\assmnt\) \\
      \(\pid{g}\) & \(\olnot{\worse^\assmnt_1} + \olnot{\worse^\assmnt_2} \ge 1 \) & \pdcut & \((\pid{e} + \pid{f}) / 5\) \\
      \bottomrule
    \end{tabular}
  \end{table}
\end{example}

\subsection{Proof Logging Multi-Objective MaxSAT Algorithms} 

To enforce bounds on the values of the different objectives, multi-objective MaxSAT
algorithms make use of CNF encodings of (reified) pseudo-Boolean constraints.
For certifying the correctness of MO-MaxSAT algorithms, the correctness of
these encodings needs to be certified as well. 
All algorithms covered in this paper make use of the 
incremental
(generalized) totalizer
encoding~\cite{BailleuxBoufkhad2003EfficientCNFEncoding,JoshiEtAl2015GeneralizedTotalizerEncoding,MartinsEtAl2014IncrementalCardinalityConstraints}. 
The totalizer encoding can be visualized as a tree where
each node has a set of output literals that ``count'' how many of the 
literals at the leaves of its subtree are \(1\).
In earlier work~\cite{VDB22QMaxSATpbCertifiedMaxSATSolver} it has been shown how the clauses of the totalizer encoding can be 
derived in the \veripb proof system from the constraints describing the
semantics of the output and internal variables. For certifying the
\emph{generalized} totalizer encoding, the semantics of the output
variables are slightly different. Each output variable \(o_b\) of a node is
defined by the two constraints \( o_b \lequiv \sum_{i=1}^n a_i\lit_i \geq b \)
over the \(n\) literals at the leaves of the subtree. In contrast to the
unweighted case, \(a_i\) can be larger than $1$ here and values of \(b\) that are
not subsetsums of \(\{a_i \mid i=1,\dots,n\}\) are omitted. Deriving the
clauses of the generalized totalizer encoding now follows the cutting
planes procedure described in~\cite{VDB22QMaxSATpbCertifiedMaxSATSolver}.

Many implementations of MaxSAT algorithms employing the totalizer encoding only
derive clauses for enforcing the \( o_b \limpliedby \sum_{i=1}^n a_i\lit_i \geq b
\) constraint, since these are enough for enforcing upper bounds on the
objective values by setting \(o_b\) to \(0\). As observed in~\cite{JabsEtAl2024FromSingleObjective}, for
the generalized totalizer it is additionally necessary to enforce \emph{all}
output variables in the range \([b,\max\{a_i \mid i=1,\dots,n\})\) to \(0\) in
order to enforce \(\sum_{i=1}^n a_i\lit_i < b\). With this modification, also for
the generalized totalizer encoding, deriving only one ``direction'' of clauses
is enough from the perspective of the solver.
In the proof, however, we will make use of both directions of the definition of these output variables; this means that a solution found by the SAT solver is not
necessarily a solution to the constraints in the proof. When using a
solution found by the SAT solver as a witness for the redundance-based strengthening rule for deriving a \pdcut (recall Section~\ref{sec:proof-pdcuts}), we
therefore need to adjust the assignment to satisfy the stricter semantics of
the proof first. This is done during proof generation by traversing through all
nodes of the (generalized) totalizer encodings and manually assigning the
output variables to values following the strict semantics described
above. This adjusted assignment is still guaranteed to
satisfy all clauses in the SAT solver.

Next, we detail three state-of-the-art MO-MaxSAT algorithms and how to generate proofs for them.

\paragraph{\pminimal~\citepmin.}
Starting from any solution \(\assmnt\), the \pminimal algorithm introduces a \pdcut excluding
all solutions that are weakly dominated by \(\assmnt\). A SAT solver is then queried while
temporarily enforcing the next-found solution to dominate
\(\assmnt\).
Both steps are achieved using 
the generalized totalizer encoding.
If no solution dominating the latest one can be found, the previous solution is
guaranteed to be Pareto-optimal.
In this case \pminimal drops the temporary constraints and starts over.
If the working formula is unsatisfiable at this point, \pminimal terminates.
An example of a search path of \pminimal in objective space is illustrated on
the left-hand side of Fig.~\ref{fig:traces}.
The blue circles and red dots represent the solutions found by the SAT solver,
with the red dots representing Pareto-optimal solutions.
In such an execution, \pminimal introduces a \pdcut for each of these solutions.

\begin{figure}[t]
  \centering
  \begin{tabular}{ccc}
    \pminimal & \lowerbounding & \bioptsat \\
    \includegraphics{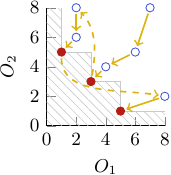} &
    \includegraphics{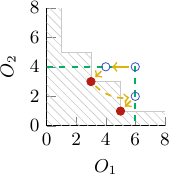} &
    \includegraphics{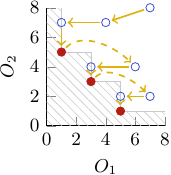} \\
  \end{tabular}
  \caption{Illustrations of the search path of MO-MaxSAT algorithms in objective space.}\label{fig:traces}
\end{figure}

For proof logging  \pminimal we certify the generalized totalizer
objective encodings and the added \pdcuts in the \veripb proof (as already described).
This allows a contradiction to be derived in the proof iff a \pdcut was added
for each element in the non-dominated set.
The temporary constraints that enforce domination are not required in the proof since
they are merely heuristics guiding the SAT solver to certain regions temporarily
and are not needed for reaching the final contradiction. 

\paragraph{\lowerbounding~\citelb.}
The \lowerbounding algorithm restricts the search space by temporarily
enforcing upper bounds of the form \(\objective_i \leq b_i\) on each objective
and then executes \pminimal within these bounds. Once \pminimal terminates, the
bounds \(b_i\) are loosened and the process is repeated. \lowerbounding
terminates once the bounds include the entire search space. In this last case,
\pminimal is executed without the temporary bound constraints and will
terminate with an unsatisfiable working instance. For proof logging
\lowerbounding it therefore suffices to proof log the invocations of \pminimal
as a subroutine.
The search path for one invocation of \pminimal within a set of bounds is illustrated in green in Fig.~\ref{fig:traces}~(middle).

\paragraph{\bioptsat~\citebos.}
The \bioptsat algorithm is specific to problems with two objectives
\((\objective_1, \objective_2)\).
The algorithm  enumerates non-dominated points under
the guarantee that the values for one objective are monotonically increasing,
while the values of the other objective are decreasing.
\bioptsat first employs the subroutine \mininc to minimize \(\objective_1\) without any
additional constraints, returning the solution \(\assmnt\).
Next, the subroutine \mindec uses solution-improving search to minimize
\(\objective_2\) under the condition that \(\objective_1 =
\objective_1\under\assmnt\).
Let the final solution found by \mindec be \(\assmnttwo\), which is guaranteed
to be Pareto-optimal.
\bioptsat then repeats this process after introducing the constraint
\(\objective_2 \leq \objective_2\under\assmnttwo - 1\).
An example of a search path of \bioptsat is illustrated in
 Fig.~\ref{fig:traces} (right).
Since \mininc finds the global minimum of \(\objective_1\) for the current
working formula, assume for now that we have a proof logging procedure for
\mininc that results in the constraint \(\constrc_\mininc := \objective_1 \ge
\objective_1\under\assmnt\) being added to the proof.
For any solution found during the execution of \mindec, we introduce a
\pdcut, which in turn is strengthened to the unit clause \((\olnot
{\worse_2})\) by combining it with \(\constrc_\mininc\).
This strengthened \pdcut is semantically equivalent to the constraint added
during solution-improving search in \mindec and at the end before \bioptsat
starts over.

\begin{example}
  Recall again the  instance in Fig.~\ref{fig:running-ex}.
  Assume that we have executed \mininc and \mindec once already and that
  \mindec has yielded the Pareto-optimal \(\assmnt =
  \{\varx_1,\olnot{\varx_2},\varx_3,\olnot{\varx_4},\olnot{\varx_5}\}\).
  Table~\ref{tab:bos-proof} details the proof steps taken to certify the
  constraint \(\objective_2 < \objective_2\under\assmnt\) added as the last
  step in \bioptsat.
  First, assume that \(\pid{lb}\) is \(\constrc_\mininc\) derived in the proof
  during \mininc.
  Next, we certify the \pdcut for \(\assmnt\) in the proof as detailed in
  Table~\ref{tab:pd-cut-proof}.
  Finally, by summing up the lower-bounding constraint from \mininc, the
  definition of \(\worse^\assmnt_1\) and the \pdcut, we obtain
  \(\olnot{\worse^\assmnt_2} \ge 1\) which is semantically equivalent to the
  constraint added by \bioptsat.
  \begin{table}[t]
    \caption{Example proof for a strengthened \pdcut in \bioptsat.}\label{tab:bos-proof}
    \centering
    \begin{tabular}{@{}r@{\hspace{1em}}r@{\hspace{1em}}c@{\hspace{1em}}c@{}}
      \toprule
      ID & \multicolumn{1}{c}{Pseudo-Boolean Constraint} & Comment & Justification \\
      \midrule
      \multicolumn{4}{c}{\emph{Input constraint and potential previous proof steps}} \\
      \(\pid{lb}\) & \(3 \varx_2 + 4 \varx_3 + 2 \varx_4 + 5 \varx_5 \ge 4\) & & Derived during \mininc \\
      \(\pid{a}\) & \(11\worse^\assmnt_1 + 3 \olnot{\varx_2} + 4 \olnot{\varx_3} + 2 \olnot{\varx_4} + 5 \olnot{\varx_5} \ge 11\) & \(\worse^\assmnt_1 \limpliedby \objective_1 \ge \objective_1\under\assmnt\) & Redundance \(\{\worse^\assmnt_1\}\) \\
      \multicolumn{4}{c}{\emph{Other \pdcut certification steps from Table~\ref{tab:pd-cut-proof}}} \\
      \(\pid{g}\) & \(\olnot{\worse^\assmnt_1} + \olnot{\worse^\assmnt_2} \ge 1\) & \pdcut & See Table~\ref{tab:pd-cut-proof} \\
      \(\pid{h}\) & \(\worse^\assmnt_2 \ge 1\) & & \(((\pid{lb} + \pid{a}) / 11) + \pid{g}\) \\
      \bottomrule
    \end{tabular}
  \end{table}
\end{example}

It remains to show that we can proof log \mininc to derive \(\constrc_\mininc\).
The details depend on the implementation of \mininc for which several
variants have been proposed~\cite{JabsEtAl2024FromSingleObjective}.
For the SAT-UNSAT variant, where \mininc is implemented as solution-improving
search, \(\constrc_\mininc\) is implicitly derived by the SAT solver during the
final unsatisfiable query.
Proof logging for the core-guided variants of \mininc can be implemented as
already described for the single-objective setting
in~\cite{BBNOV23CertifiedCore-GuidedMaxSATSolving}; in short,
\(\constrc_\mininc\) can be derived from the final reformulated objective. %

\subsection{Proof Logging Core Boosting}

Core boosting~\cite{JBJ24CoreBoostingSAT-BasedMulti-objectiveOptimization} is a recently-proposed
preprocessing/reformulation technique for MO-MaxSAT, consisting of applying the single-objective core-guided
optimization algorithm OLL~\cite{AKMS12Unsatisfiability-basedoptimizationclasp,DBLP:conf/cp/MorgadoDM14} wrt each objective
individually, before executing an MO-MaxSAT algorithm on the reformulated objectives obtained from OLL.
With this, core boosting shrinks the search space that needs to be considered by
the MO-MaxSAT algorithm by deriving lower bounds for each objective.
Core boosting also alters the structure of the CNF objective encodings since the totalizer
structures built by OLL during core boosting can be reused in the objective
encodings built by the MO-MaxSAT algorithm.

We give a brief overview of the single-objective core-guided OLL MaxSAT
algorithm to the extent relevant for understanding how proof logging for core
boosting works.
Given an objective \(\objective\), OLL invokes a SAT solver with the
assumptions that none of the literals in \(\objective\) incurs cost.
If these assumptions are not satisfiable, the SAT solver returns an implied clause
\(\constrc\)---referred to as an unsatisfiable core---over the objective literals.
OLL now introduces counting variables \(o_i \limpliedby \sum_{\lit \in \constrc} \lit
\ge i\) for \(i=2,\dots,|\constrc|\) (encoded by the totalizer encoding) and
reformulates the objective by adding \(c_\constrc \cdot (\sum_{\lit \in \constrc} -\lit
+ \sum_{i=2}^{|\constrc|} o_i + 1)\) to it, where \(c_\constrc\) is the minimum
objective coefficient of any literals in \(\constrc\).
Iteratively applying this process guarantees that the reformulated objective is
always equal to the original objective and OLL terminates once there is a
solution that does not incur cost on any of the literals in the reformulated
objective.

\begin{example}\label{ex:oll}
  Let \(\objective = \varx_1 + \varx_2 + \varx_3 + 2 \varx_4\) be one of the
  objectives that core boosting is performed on.
  Assume the first core extracted is \(\constrc_1 = (\varx_1 \lor \varx_2 \lor
  \varx_3 \lor \varx_4)\).
  OLL reformulates \(\constrc_1\) by adding counting variables \(a_i
  \limpliedby \varx_1 + \varx_2 + \varx_3 + \varx_4 \ge i\) for \(i=2,3,4\).
  Assume the next core extracted is \(\constrc_2 = (\varx_4 \lor a_2)\), which is
  reformulated by adding the counting variable \(b_2 \limpliedby \varx_4 + a_2
  \ge 2\).
  The final reformulated objective is \(\objective' = a_3 + a_4 + b_2 + 2\).
\end{example}

Proof logging for OLL is described
in~\cite{BBNOV23CertifiedCore-GuidedMaxSATSolving}, yielding a constraint of
form \(\objective \ge \objective'\) associating the reformulated objective
\(\objective'\) with the original objective \(\objective\).
After OLL has been executed, core boosting builds a CNF encoding for each
reformulated objective.
However, if the reformulated objective contains a sequence of literals that are
outputs for the same totalizer, they should be reused as an internal node of
the final encoding employed by the MO-MaxSAT algorithm rather than individual leaves.
This avoids introducing new auxiliary variables that would end up being
equivalent to the variables introduced by OLL.

\begin{example}
  Fig.~\ref{fig:cb-enc} (left and middle-left, respectively) shows the two totalizer
  encodings built by OLL, where \(\square\) denotes the output variable with
  value 1 that is omitted by OLL.
  Fig.~\ref{fig:cb-enc} (middle-right) shows the encoding of the
  reformulated objective built by core boosting after executing OLL, where
  \(a_3,a_4\) is reused as an internal node.
  The dashed box shows the encoding that would be built when treating \(a_3\)
  and \(a_4\) individually.
  Since \(a_3\) and \(a_4\) are already totalizer outputs, \(d_1\) and \(d_2\)
  in this structure are equivalent to \(a_3\) and \(a_4\) and therefore
  redundant.

  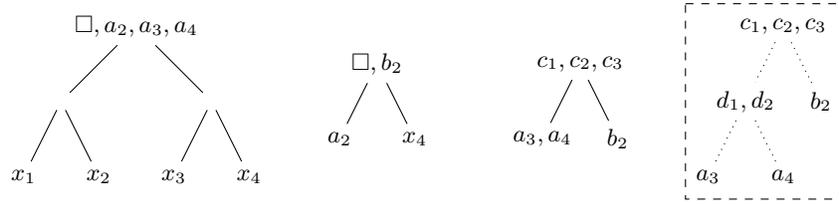
\begin{figure}[t]
    \begin{center}
      \begin{tikzpicture}
        \node (a) at (1.5,0) {\(\square,a_2,a_3,a_4\)};
        \node (i1) at (0.5,-1) {};
        \node (i2) at (2.5,-1) {};
        \node (x1) at (0,-2) {\(x_1\)};
        \node (x2) at (1,-2) {\(x_2\)};
        \node (x3) at (2,-2) {\(x_3\)};
        \node (x4) at (3,-2) {\(x_4\)};

        \draw (a) -- (i1);
        \draw (a) -- (i2);
        \draw (i1) -- (x1);
        \draw (i1) -- (x2);
        \draw (i2) -- (x3);
        \draw (i2) -- (x4);

        \begin{scope}[shift={(4.2,-.5)}]
          \node (b) at (.5,0) {\(\square,b_2\)};
          \node (a2) at (0,-1) {\(a_2\)};
          \node (x5) at (1,-1) {\(x_4\)};

          \draw (b) -- (a2);
          \draw (b) -- (x5);
        \end{scope}

        \begin{scope}[shift={(6.9,-.5)}]
          \node (c) at (.5,0) {\(c_1,c_2,c_3\)};
          \node (a34) at (0,-1) {\(a_3,a_4\)};
          \node (b2) at (1,-1) {\(b_2\)};

          \draw (c) -- (a34);
          \draw (c) -- (b2);
        \end{scope}

        \begin{scope}[shift={(9.1,0)}]
          \node (c) at (1,0) {\(c_1,c_2,c_3\)};
          \node (d12) at (.5,-1) {\(d_1,d_2\)};
          \node (b2) at (1.5,-1) {\(b_2\)};

          \draw [dotted] (c) -- (d12);
          \draw [dotted] (c) -- (b2);

          \node (a3) at (0,-2) {\(a_3\)};
          \node (a4) at (1,-2) {\(a_4\)};

          \draw [dotted] (d12) -- (a3);
          \draw [dotted] (d12) -- (a4);

          \draw [dashed] (-.3,.3) -- (1.8,.3) -- (1.8,-2.3) -- (-.3,-2.3) -- cycle;
        \end{scope}

      \end{tikzpicture}
    \end{center}
    \caption{An objective encoding structure built by core boosting. 
    The dashed box shows the alternative structure that would
    be built without reusing \(a_3,a_4\) as an internal node.}\label{fig:cb-enc}
  \end{figure}
\end{example}

\newcommand\constrordabove{\constrc^{a}}
\newcommand\constrordbelow{\constrc^{b}}
Going beyond previous work, certifying the encoding of the reformulated
objective built during core boosting requires special care.
In particular, due to the process of reusing partial sequences of totalizer output variables as internal nodes, using the semantics of the totalizer outputs built by OLL does
\emph{not} allow us to derive the clauses required for the encoding.
Instead, for a sequence of variables \(o_r,\dots,o_{r+n}\)  reused as
an internal node in the encoding, we introduce ordering constraints
\[ \constrordabove_v := \sum_{i=r}^{v} [\olnot{o_i} \ge 0] + \sum_{i=v+1}^{r+n} [\olnot{o_i} + o_v \ge 1]
= \left[ M o_v + \sum_{i=r}^{r+n} \olnot{o_i} \ge M \right]
, \]
where \(M = r+n-v+1\), and
\[ \constrordbelow_v := \sum_{i=v+1}^{r+n} [o_i \ge 0] + \sum_{i=r}^{v-1} [\olnot{o_v} + o_i \ge 1]
= \left[ (v-r)\olnot{o_v} + \sum_{i=r}^{r+n} o_i \ge (v-r) \right]
, \]
for each \(v=r,\dots,r+n\).
These constraints sum up axioms and ordering constraints, which can be derived
from the semantic definitions of the totalizer output variables, and are
therefore derivable in the proof.
Furthermore, \(\constrordabove_v,\constrordbelow_v\) are identical to \(o_v
\lequiv \sum_{i=r}^{r+n} o_i \ge (v-r)\).
When deriving the clauses involving the reused output variable sequence, we
therefore use \(\constrordabove_v\) and \(\constrordbelow_v\) instead of the
semantic definitions of the variables, which allows for deriving the clauses of
the encoding.

After core boosting, the output variables of the objective encoding are now
defined with respect to the \emph{reformulated} objective rather than the
original one.
As a final step, after certifying a \pdcut with respect to the original
objectives, we therefore use the objective reformulation constraints derived
while executing OLL to certify the \pdcut with respect to the reformulated
objectives.

\begin{example}
  All required clauses for the  encoding shown in
  Fig.~\ref{fig:cb-enc} (middle-right) can be derived from the semantic definitions \(c_i
  \lequiv a_3 + a_4 + b_2 \ge i\) for \(i=1,2,3\) while treating the individual
  variable \(b_2\) as a leaf and using \(\constrordabove\) and
  \(\constrordbelow\) as ``pseudo semantics'' for the node \(a_3,a_4\).
  Note that the semantics for \(c_i\) are with respect to the variable part of
  the reformulated objective.
\end{example}

\section{Experiments}

We extended all algorithms implemented in the
Scuttle~\cite{scuttle,JBIJ23PreprocessingSAT-BasedMulti-ObjectiveCombinatorialOptimization,JBJ24CoreBoostingSAT-BasedMulti-objectiveOptimization}
MO-MaxSAT solver---namely,
\pminimal~\citepmin, \lowerbounding~\citelb, and (the SAT-UNSAT variant of) \bioptsat~\citebos,
each with and without core boosting---with the just-described \veripb proof logging.
We used CaDiCaL 2.0.0~\cite{BFFFFP24CaDiCaL20} as the SAT solver within Scuttle and
the \veripb 2.2.2 proof checker~\cite{veripbchecker}
for checking the produced proofs.\footnote{Both \veripb and CaDiCaL contain bug-fixes obtained
directly from their respective authors based on our reporting while preparing our
experiments.}
The proof logging Scuttle implementation is available in open source~\cite{scuttle,jabs_2025_14731485}.
We evaluate the implementation on the same set of benchmark instances
used in the original work proposing core boosting~\cite{JBJ24CoreBoostingSAT-BasedMulti-objectiveOptimization}.
This set of benchmarks consists of 300 instances from 6 domains with the number
of objectives ranging from 2 to 5.
The experiments were run on 2.50-GHz Intel Xeon Gold 6248 machines with 381-GB RAM in RHEL
under a 32-GB memory limit and 1 hour time limit for Scuttle.

\begin{table}[t]
  \centering
  \caption{Average proof logging  overheads and average proof checking overheads.\label{tab:overheads}}
  \begin{tabular}{@{}lrrrr@{}}
    \toprule
    & \multicolumn{2}{c}{with core boosting} & \multicolumn{2}{c}{without core boosting} \\
    Algorithm & \(\frac{\text{solving w/proof log}}{\text{solving}}\) & \(\frac{\text{proof checking}}{\text{solving w/proof log}}\) & \(\frac{\text{solving w/proof log}}{\text{solving}}\) & \(\frac{\text{proof checking}}{\text{solving w/proof log}}\) \\
    \midrule
    \pmin          & \(1.233\) & \(47.52\) & \(1.176\) & \(47.89\) \\
    \bos           & \(1.247\) & \(29.70\) & \(1.140\) & \(18.70\) \\
    \lowerbounding & \(1.220\) & \(21.81\) & \(1.294\) & \(21.78\) \\
    \bottomrule
  \end{tabular}
\end{table}

The per-instance
Scuttle runtimes for each benchmark domain with and without proof logging are
shown in Fig.~\ref{fig:logging-overhead}
for all three MO-MaxSAT algorithms in Scuttle, both with and without core boosting.
We observe that the runtime overhead of proof logging is in all cases quite tolerable, with
average runtime increase
ranging from 14\% to 29\% depending on the algorithm; see Table~\ref{tab:overheads} for details.
There are at most 3 instances (for \pmin without core boosting) that were only
solved without proof logging within the given 1-h time limit.
While this work is \emph{not} focussed on improving proof \emph{checking} but rather realizing
for the first time proof logging in a multi-objective setting,
Table~\ref{tab:overheads}  also includes the proof checking overhead, i.e.,
(proof checking time)/(Scuttle runtime with proof logging), resulting from
checking the Scuttle proofs with the \veripb checker.
With a time limit of 10 hours enforced for the \veripb checker, we
observed that checking takes on average 1--1.5 orders of magnitude more time
than solving the instances with proof logging enabled. It should be noted that similar
observations have been made, e.g., in the realm of \veripb-based
certified MaxSAT preprocessing~\cite{IOTBJMN24CertifiedMaxSATPreprocessing}. Indeed, these observations motivate
seeking improvements to the current runtime performance of the \veripb checker.
We observed that in cases, in particular for \pminimal (see the appendix for more details),
the proof checking overhead appears to somewhat correlate with the number of PD cuts
produced during search.

\begin{figure}[t]
  \centering
  \begin{tabular}{ccc}
    \multicolumn{3}{c}{with core boosting} \\
    \pminimal & \lowerbounding & \bioptsat \\
    \includegraphics{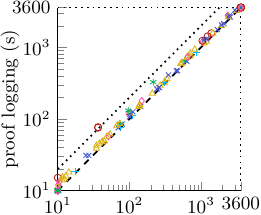} &
    \includegraphics{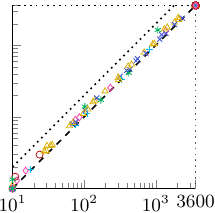} &
    \includegraphics{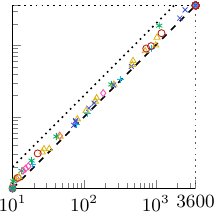} \\
    \multicolumn{3}{c}{without core boosting} \\
    \includegraphics{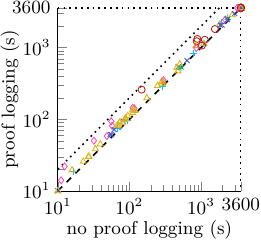} &
    \includegraphics{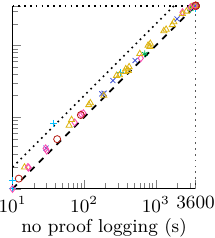} &
    \includegraphics{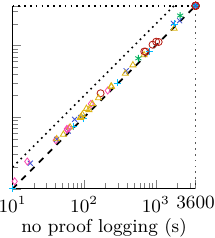} \\
    \multicolumn{3}{c}{\includegraphics{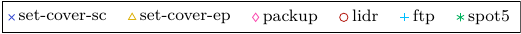}} \\
  \end{tabular}
  \caption{Proof logging runtime overheads.}\label{fig:logging-overhead}
\end{figure}

\section{Conclusions}

We realized for the first time proof logging for multi-objective MaxSAT solving.
Circumventing the fact that \veripb  does not offer direct support for multiple objectives,
we detailed how preorders in \veripb can be used to provide certificates for MO-MaxSAT algorithms
that compute a representative solution for each element of the non-dominated set (with respect to the Pareto order). 
We achieved this 
without changes to the
\veripb format or the proof checker. Integrating  \veripb proof logging into a state-of-the-art multi-objective MaxSAT solver,
we empirically showed that proof logging can be made scalable for MO-MaxSAT.
While we in this work detailed how \veripb can be employed for
proof logging SAT-based multi-objective approaches, the same concepts are applicable
to enabling proof logging for similar algorithmic ideas instantiated for other contexts,
e.g., in the context of pseudo-Boolean optimization.
Developing proof logging methods that capture the computation of all Pareto-optimal solutions, i.e.,
every solution at each element in the non-dominated set, potentially by extending \veripb, %
also remains part of future work.

\begin{credits}
  \subsubsection{\ackname}
We thank Armin Biere, Katalin Fazekas, and Florian Pollitt for their help in
fixing some bugs in CaDiCaL, and Andy Oertel for fixing a bug in the
VeriPB proof checker.
This work is partially funded by
Research Council of Finland (grants 356046 and 362987) and
the European Union (ERC, CertiFOX, 101122653). Views and opinions expressed are however those of the author(s) only and do not necessarily reflect those of the European Union or the European Research Council. Neither the European Union nor the granting authority can be held responsible for them.
\end{credits}

\bibliographystyle{splncs04}
 \bibliography{paper,include/bb_refs_nourl}

\begin{thebibliography}{10}
\providecommand{\url}[1]{\texttt{#1}}
\providecommand{\urlprefix}{URL }
\providecommand{\doi}[1]{https://doi.org/#1}

\bibitem{veripbchecker}
{VeriPB}: {V}erifier for pseudo-{B}oolean proofs. \url{https://gitlab.com/MIAOresearch/software/VeriPB}

\bibitem{AKMS12Unsatisfiability-basedoptimizationclasp}
Andres, B., Kaufmann, B., Matheis, O., Schaub, T.: Unsatisfiability-based optimization in {clasp}. In: Dovier, A., Costa, V.S. (eds.) Technical Communications of the 28th International Conference on Logic Programming, {ICLP} 2012, September 4-8, 2012, Budapest, Hungary. LIPIcs, vol.~17, pp. 211--221. Schloss Dagstuhl - Leibniz-Zentrum f{\"{u}}r Informatik (2012). \doi{10.4230/LIPIcs.ICLP.2012.211}

\bibitem{BJM21MaximumSatisfiabiliy}
Bacchus, F., J{\"{a}}rvisalo, M., Martins, R.: Maximum satisfiabiliy. In: Biere et~al.  \cite{BHvW21HandbookSatisfiability-SecondEdition}, pp. 929--991. \doi{10.3233/FAIA201008}

\bibitem{DBLP:conf/qestformats/BaierCK24}
Baier, C., Chau, C., Kl{\"{u}}ppelholz, S.: Certificates and witnesses for multi-objective queries in markov decision processes. In: Hillston, J., Soudjani, S., Waga, M. (eds.) Quantitative Evaluation of Systems and Formal Modeling and Analysis of Timed Systems - First International Joint Conference, {QEST+FORMATS} 2024, Calgary, AB, Canada, September 9-13, 2024, Proceedings. Lecture Notes in Computer Science, vol. 14996, pp. 1--18. Springer (2024). \doi{10.1007/978-3-031-68416-6\_1}

\bibitem{BailleuxBoufkhad2003EfficientCNFEncoding}
Bailleux, O., Boufkhad, Y.: Efficient {CNF} encoding of boolean cardinality constraints. In: Principles and Practice of Constraint Programming {\textendash} {CP} 2003, pp. 108--122. Springer Berlin Heidelberg (2003). \doi{10.1007/978-3-540-45193-8\_8}

\bibitem{BBNOPV24CertifyingWithoutLossGeneralityReasoningSolution-Improving}
Berg, J., Bogaerts, B., Nordstr{\"{o}}m, J., Oertel, A., Paxian, T., Vandesande, D.: Certifying without loss of generality reasoning in solution-improving maximum satisfiability. In: Shaw  \cite{cp/2024}, pp. 4:1--4:28. \doi{10.4230/LIPICS.CP.2024.4}

\bibitem{BBNOV23CertifiedCore-GuidedMaxSATSolving}
Berg, J., Bogaerts, B., Nordstr{\"{o}}m, J., Oertel, A., Vandesande, D.: Certified core-guided {MaxSAT} solving. In: Pientka, B., Tinelli, C. (eds.) Automated Deduction - {CADE} 29 - 29th International Conference on Automated Deduction, Rome, Italy, July 1-4, 2023, Proceedings. Lecture Notes in Computer Science, vol. 14132, pp. 1--22. Springer (2023). \doi{10.1007/978-3-031-38499-8\_1}

\bibitem{BFFFFP24CaDiCaL20}
Biere, A., Faller, T., Fazekas, K., Fleury, M., Froleyks, N., Pollitt, F.: Cadical 2.0. In: Gurfinkel, A., Ganesh, V. (eds.) Computer Aided Verification - 36th International Conference, {CAV} 2024, Montreal, QC, Canada, July 24-27, 2024, Proceedings, Part {I}. Lecture Notes in Computer Science, vol. 14681, pp. 133--152. Springer (2024). \doi{10.1007/978-3-031-65627-9\_7}

\bibitem{BHvW21HandbookSatisfiability-SecondEdition}
Biere, A., Heule, M., {van Maaren}, H., Walsh, T. (eds.): Handbook of Satisfiability - Second Edition, Frontiers in Artificial Intelligence and Applications, vol.~336. {IOS} Press (2021). \doi{10.3233/FAIA336}

\bibitem{BGMN23CertifiedDominanceSymmetryBreakingCombinatorialOptimisation}
Bogaerts, B., Gocht, S., McCreesh, C., Nordstr{\"{o}}m, J.: Certified dominance and symmetry breaking for combinatorial optimisation. J. Artif. Intell. Res.  \textbf{77},  1539--1589 (2023). \doi{10.1613/jair.1.14296}

\bibitem{DBLP:conf/sat/CabralJM22}
Cabral, M., Janota, M., Manquinho, V.: {SAT}-based leximax optimisation algorithms. In: Meel, K.S., Strichman, O. (eds.) 25th International Conference on Theory and Applications of Satisfiability Testing, {SAT} 2022, August 2--5, 2022, Haifa, Israel. LIPIcs, vol.~236, pp. 29:1--29:19. Schloss Dagstuhl - Leibniz-Zentrum f{\"{u}}r Informatik (2022). \doi{10.4230/LIPICS.SAT.2022.29}

\bibitem{CCT87complexitycutting-planeproofs}
Cook, W.J., Coullard, C.R., Tur{\'{a}}n, G.: On the complexity of cutting-plane proofs. Discret. Appl. Math.  \textbf{18}(1),  25--38 (1987). \doi{10.1016/0166-218X(87)90039-4}

\bibitem{CortesEtAl2024SlideDrill}
Cortes, J.a., Lynce, I., Manquinho, V.: {Slide\&Drill, a New Approach for Multi-Objective Combinatorial Optimization}. In: Shaw, P. (ed.) 30th International Conference on Principles and Practice of Constraint Programming (CP 2024). Leibniz International Proceedings in Informatics (LIPIcs), vol.~307, pp. 8:1--8:17. Schloss Dagstuhl -- Leibniz-Zentrum f{\"u}r Informatik, Dagstuhl, Germany (2024). \doi{10.4230/LIPIcs.CP.2024.8}, \url{https://drops.dagstuhl.de/entities/document/10.4230/LIPIcs.CP.2024.8}

\bibitem{CortesEtAl2023NewCoreGuided}
Cortes, J., Lynce, I., Manquinho, V.M.: New core-guided and hitting set algorithms for multi-objective combinatorial optimization. In: Sankaranarayanan, S., Sharygina, N. (eds.) Tools and Algorithms for the Construction and Analysis of Systems---29th International Conference, {TACAS} 2023, Held as Part of the European Joint Conferences on Theory and Practice of Software, {ETAPS} 2022, Paris, France, April 22--27, 2023, Proceedings, Part {II}. Lecture Notes in Computer Science, vol. 13994, pp. 55--73. Springer (2023). \doi{10.1007/978-3-031-30820-8\_7}

\bibitem{CHHKS17EfficientCertifiedRATVerification}
Cruz{-}Filipe, L., Heule, M.J.H., Jr., W.A.H., Kaufmann, M., Schneider{-}Kamp, P.: Efficient certified {RAT} verification. In: {de Moura}, L. (ed.) Automated Deduction - {CADE} 26 - 26th International Conference on Automated Deduction, Gothenburg, Sweden, August 6-11, 2017, Proceedings. Lecture Notes in Computer Science, vol. 10395, pp. 220--236. Springer (2017). \doi{10.1007/978-3-319-63046-5\_14}

\bibitem{CMS17EfficientCertifiedResolutionProofChecking}
Cruz{-}Filipe, L., Marques{-}Silva, J., Schneider{-}Kamp, P.: Efficient certified resolution proof checking. In: Legay, A., Margaria, T. (eds.) Tools and Algorithms for the Construction and Analysis of Systems - 23rd International Conference, {TACAS} 2017, Held as Part of the European Joint Conferences on Theory and Practice of Software, {ETAPS} 2017, Uppsala, Sweden, April 22-29, 2017, Proceedings, Part {I}. Lecture Notes in Computer Science, vol. 10205, pp. 118--135 (2017). \doi{10.1007/978-3-662-54577-5\_7}

\bibitem{DMMNOS24Pseudo-BooleanReasoningAboutStatesTransitionsCertify}
Demirovic, E., McCreesh, C., McIlree, M.J., Nordstr{\"{o}}m, J., Oertel, A., Sidorov, K.: Pseudo-{Boolean} reasoning about states and transitions to certify dynamic programming and decision diagram algorithms. In: Shaw  \cite{cp/2024}, pp. 9:1--9:21. \doi{10.4230/LIPICS.CP.2024.9}

\bibitem{DBLP:journals/anor/DemirovicMW19}
Demirovic, E., Musliu, N., Winter, F.: Modeling and solving staff scheduling with partial weighted {maxSAT}. Ann. Oper. Res.  \textbf{275}(1),  79--99 (2019). \doi{10.1007/S10479-017-2693-Y}

\bibitem{ES03ExtensibleSAT-solver}
E{\'{e}}n, N., S{\"{o}}rensson, N.: An extensible {SAT}-solver. In: Giunchiglia, E., Tacchella, A. (eds.) Theory and Applications of Satisfiability Testing, 6th International Conference, {SAT} 2003. Santa Margherita Ligure, Italy, May 5-8, 2003 Selected Revised Papers. Lecture Notes in Computer Science, vol.~2919, pp. 502--518. Springer (2003). \doi{10.1007/978-3-540-24605-3\_37}

\bibitem{EenSoerensson2006TranslatingPseudoBoolean}
E{\'{e}}n, N., S{\"{o}}rensson, N.: Translating pseudo-boolean constraints into {SAT}. J. Satisf. Boolean Model. Comput.  \textbf{2}(1-4),  1--26 (2006). \doi{10.3233/sat190014}

\bibitem{E05MulticriteriaOptimization2ed}
Ehrgott, M.: Multicriteria Optimization {(2.} ed.). Springer (2005). \doi{10.1007/3-540-27659-9}

\bibitem{EGMN20JustifyingAllDifferencesUsingPseudo-BooleanReasoning}
Elffers, J., Gocht, S., McCreesh, C., Nordstr{\"{o}}m, J.: Justifying all differences using pseudo-{Boolean} reasoning. In: The Thirty-Fourth {AAAI} Conference on Artificial Intelligence, {AAAI} 2020, The Thirty-Second Innovative Applications of Artificial Intelligence Conference, {IAAI} 2020, The Tenth {AAAI} Symposium on Educational Advances in Artificial Intelligence, {EAAI} 2020, New York, {NY}, {USA}, February 7-12, 2020. pp. 1486--1494. {AAAI} Press (2020), \url{https://ojs.aaai.org/index.php/AAAI/article/view/5507}

\bibitem{GMNO22CertifiedCNFTranslationsPseudo-BooleanSolving}
Gocht, S., Martins, R., Nordstr{\"{o}}m, J., Oertel, A.: Certified {CNF} translations for pseudo-{Boolean} solving. In: Meel, K.S., Strichman, O. (eds.) 25th International Conference on Theory and Applications of Satisfiability Testing, {SAT} 2022, August 2-5, 2022, Haifa, Israel. LIPIcs, vol.~236, pp. 16:1--16:25. Schloss Dagstuhl - Leibniz-Zentrum f{\"{u}}r Informatik (2022). \doi{10.4230/LIPIcs.SAT.2022.16}

\bibitem{GMMNPT20CertifyingSolversCliqueMaximumCommonConnected}
Gocht, S., McBride, R., McCreesh, C., Nordstr{\"{o}}m, J., Prosser, P., Trimble, J.: Certifying solvers for clique and maximum common (connected) subgraph problems. In: Simonis, H. (ed.) Principles and Practice of Constraint Programming - 26th International Conference, {CP} 2020, Louvain-la-Neuve, Belgium, September 7-11, 2020, Proceedings. Lecture Notes in Computer Science, vol. 12333, pp. 338--357. Springer (2020). \doi{10.1007/978-3-030-58475-7\_20}

\bibitem{GMN20SubgraphIsomorphismMeetsCuttingPlanesSolving}
Gocht, S., McCreesh, C., Nordstr{\"{o}}m, J.: Subgraph isomorphism meets cutting planes: Solving with certified solutions. In: Bessiere, C. (ed.) Proceedings of the Twenty-Ninth International Joint Conference on Artificial Intelligence, {IJCAI} 2020. pp. 1134--1140. ijcai.org (2020). \doi{10.24963/ijcai.2020/158}, scheduled for July 2020, Yokohama, Japan, postponed due to the Corona pandemic.

\bibitem{GMN22AuditableConstraintProgrammingSolver}
Gocht, S., McCreesh, C., Nordstr{\"{o}}m, J.: An auditable constraint programming solver. In: Solnon, C. (ed.) 28th International Conference on Principles and Practice of Constraint Programming, {CP} 2022, July 31 to August 8, 2022, Haifa, Israel. LIPIcs, vol.~235, pp. 25:1--25:18. Schloss Dagstuhl - Leibniz-Zentrum f{\"{u}}r Informatik (2022). \doi{10.4230/LIPIcs.CP.2022.25}

\bibitem{GN21CertifyingParityReasoningEfficientlyUsingPseudo-Boolean}
Gocht, S., Nordstr{\"{o}}m, J.: Certifying parity reasoning efficiently using pseudo-{Boolean} proofs. In: Thirty-Fifth {AAAI} Conference on Artificial Intelligence, {AAAI} 2021, Thirty-Third Conference on Innovative Applications of Artificial Intelligence, {IAAI} 2021, The Eleventh Symposium on Educational Advances in Artificial Intelligence, {EAAI} 2021, Virtual Event, February 2-9, 2021. pp. 3768--3777. {AAAI} Press (2021), \url{https://ojs.aaai.org/index.php/AAAI/article/view/16494}

\bibitem{GN03VerificationProofsUnsatisfiabilityCNFFormulas}
Goldberg, E.I., Novikov, Y.: Verification of proofs of unsatisfiability for {CNF} formulas. In: 2003 Design, Automation and Test in Europe Conference and Exposition {(DATE} 2003), 3-7 March 2003, Munich, Germany. pp. 10886--10891. {IEEE} Computer Society (2003). \doi{10.1109/DATE.2003.10008}

\bibitem{DBLP:journals/cor/GuerreiroCVBLMF23}
Guerreiro, A.P., Cortes, J., Vanderpooten, D., Bazgan, C., Lynce, I., Manquinho, V., Figueira, J.R.: Exact and approximate determination of the pareto front using minimal correction subsets. Comput. Oper. Res.  \textbf{153},  106153 (2023). \doi{10.1016/J.COR.2023.106153}

\bibitem{HHW13Trimmingwhilecheckingclausalproofs}
Heule, M., Jr., W.A.H., Wetzler, N.: Trimming while checking clausal proofs. In: Formal Methods in Computer-Aided Design, {FMCAD} 2013, Portland, OR, {USA}, October 20-23, 2013. pp. 181--188. {IEEE} (2013), \url{https://ieeexplore.ieee.org/document/6679408/}

\bibitem{HHW13VerifyingRefutationsExtendedResolution}
Heule, M., Jr., W.A.H., Wetzler, N.: Verifying refutations with extended resolution. In: Bonacina, M.P. (ed.) Automated Deduction - {CADE-24} - 24th International Conference on Automated Deduction, Lake Placid, {NY}, {USA}, June 9-14, 2013. Proceedings. Lecture Notes in Computer Science, vol.~7898, pp. 345--359. Springer (2013). \doi{10.1007/978-3-642-38574-2\_24}

\bibitem{HOGN24CertifyingMIP-BasedPresolveReductions0-1Integer}
Hoen, A., Oertel, A., Gleixner, A.M., Nordstr{\"{o}}m, J.: Certifying {MIP}-based presolve reductions for 0-1 integer linear programs. In: Dilkina, B. (ed.) Integration of Constraint Programming, Artificial Intelligence, and Operations Research - 21st International Conference, {CPAIOR} 2024, Uppsala, Sweden, May 28-31, 2024, Proceedings, Part {I}. Lecture Notes in Computer Science, vol. 14742, pp. 310--328. Springer (2024). \doi{10.1007/978-3-031-60597-0\_20}

\bibitem{IOTBJMN24CertifiedMaxSATPreprocessing}
Ihalainen, H., Oertel, A., Tan, Y.K., Berg, J., J{\"{a}}rvisalo, M., Myreen, M.O., Nordstr{\"{o}}m, J.: Certified {MaxSAT} preprocessing. In: Benzm{\"{u}}ller, C., Heule, M.J.H., Schmidt, R.A. (eds.) Automated Reasoning - 12th International Joint Conference, {IJCAR} 2024, Nancy, France, July 3-6, 2024, Proceedings, Part {I}. Lecture Notes in Computer Science, vol. 14739, pp. 396--418. Springer (2024). \doi{10.1007/978-3-031-63498-7\_24}

\bibitem{scuttle}
Jabs, C.: {Scuttle}: {A} multi-objective {MaxSAT} solver. \url{https://bitbucket.org/coreo-group/scuttle}

\bibitem{JBIJ23PreprocessingSAT-BasedMulti-ObjectiveCombinatorialOptimization}
Jabs, C., Berg, J., Ihalainen, H., J{\"{a}}rvisalo, M.: Preprocessing in {SAT}-based multi-objective combinatorial optimization. In: Yap, R.H.C. (ed.) 29th International Conference on Principles and Practice of Constraint Programming, {CP} 2023, August 27-31, 2023, Toronto, Canada. LIPIcs, vol.~280, pp. 18:1--18:20. Schloss Dagstuhl - Leibniz-Zentrum f{\"{u}}r Informatik (2023). \doi{10.4230/LIPICS.CP.2023.18}

\bibitem{JBJ24CoreBoostingSAT-BasedMulti-objectiveOptimization}
Jabs, C., Berg, J., J{\"{a}}rvisalo, M.: Core boosting in {SAT}-based multi-objective optimization. In: Dilkina, B. (ed.) Integration of Constraint Programming, Artificial Intelligence, and Operations Research - 21st International Conference, {CPAIOR} 2024, Uppsala, Sweden, May 28-31, 2024, Proceedings, Part {II}. Lecture Notes in Computer Science, vol. 14743, pp. 1--19. Springer (2024). \doi{10.1007/978-3-031-60599-4\_1}

\bibitem{JabsEtAl2024FromSingleObjective}
Jabs, C., Berg, J., Niskanen, A., J{\"{a}}rvisalo, M.: From single-objective to bi-objective maximum satisfiability solving. Journal of Artificial Intelligence Research  \textbf{80},  1223--1269 (Aug 2024). \doi{10.1613/jair.1.15333}

\bibitem{jabs_2025_14731485}
Jabs, C.J., Berg, J., Bogaerts, B., Järvisalo, M.: Certifying pareto-optimality in multi-objective maximum satisfiability---{TACAS} 2025 artefact (Jan 2025). \doi{10.5281/zenodo.14731485}

\bibitem{DBLP:journals/jsat/JanotaLMM12}
Janota, M., Lynce, I., Manquinho, V., Marques{-}Silva, J.: {PackUp}: Tools for package upgradability solving. J. Satisf. Boolean Model. Comput.  \textbf{8}(1/2),  89--94 (2012). \doi{10.3233/SAT190090}

\bibitem{JHB12InprocessingRules}
J{\"{a}}rvisalo, M., Heule, M., Biere, A.: Inprocessing rules. In: Gramlich, B., Miller, D., Sattler, U. (eds.) Automated Reasoning - 6th International Joint Conference, {IJCAR} 2012, Manchester, {UK}, June 26-29, 2012. Proceedings. Lecture Notes in Computer Science, vol.~7364, pp. 355--370. Springer (2012). \doi{10.1007/978-3-642-31365-3\_28}

\bibitem{JoshiEtAl2015GeneralizedTotalizerEncoding}
Joshi, S., Martins, R., Manquinho, V.M.: Generalized totalizer encoding for pseudo-boolean constraints. In: Pesant, G. (ed.) Principles and Practice of Constraint Programming - 21st International Conference, {CP} 2015, Cork, Ireland, August 31 - September 4, 2015, Proceedings. Lecture Notes in Computer Science, vol.~9255, pp. 200--209. Springer (2015). \doi{10.1007/978-3-319-23219-5_15}

\bibitem{KP19Encodingcardinalityconstraintsusingmultiwaymerge}
Karpinski, M., Piotr{\'{o}}w, M.: Encoding cardinality constraints using multiway merge selection networks. Constraints An Int. J.  \textbf{24}(3-4),  234--251 (2019). \doi{10.1007/S10601-019-09302-0}

\bibitem{KoshimuraEtAl2009MinimalModelGeneration}
Koshimura, M., Nabeshima, H., Fujita, H., Hasegawa, R.: Minimal model generation with respect to an atom set. In: Peltier, N., Sofronie{-}Stokkermans, V. (eds.) Proceedings of the 7th International Workshop on First-Order Theorem Proving, {FTP} 2009, Oslo, Norway, July 6-7, 2009. {CEUR} Workshop Proceedings, vol.~556. CEUR-WS.org (2009), \url{http://ceur-ws.org/Vol-556/paper06.pdf}

\bibitem{DBLP:conf/cp/MaliotovM18}
Malioutov, D., Meel, K.S.: {MLIC:} {A} maxsat-based framework for learning interpretable classification rules. In: Hooker, J.N. (ed.) Principles and Practice of Constraint Programming - 24th International Conference, {CP} 2018, Lille, France, August 27--31, 2018, Proceedings. Lecture Notes in Computer Science, vol. 11008, pp. 312--327. Springer (2018). \doi{10.1007/978-3-319-98334-9\_21}

\bibitem{MLM21Conflict-DrivenClauseLearningSATSolvers}
Marques{-}Silva, J., Lynce, I., Malik, S.: Conflict-driven clause learning {SAT} solvers. In: Biere et~al.  \cite{BHvW21HandbookSatisfiability-SecondEdition}, pp. 133--182. \doi{10.3233/FAIA200987}

\bibitem{MartinsEtAl2014IncrementalCardinalityConstraints}
Martins, R., Joshi, S., Manquinho, V., Lynce, I.: Incremental cardinality constraints for {MaxSAT}. In: Lecture Notes in Computer Science, pp. 531--548. Springer International Publishing (aug 2014). \doi{10.1007/978-3-319-10428-7_39}

\bibitem{DBLP:conf/cp/MorgadoDM14}
Morgado, A., Dodaro, C., Marques{-}Silva, J.: Core-guided {MaxSAT} with soft cardinality constraints. In: O'Sullivan, B. (ed.) Principles and Practice of Constraint Programming - 20th International Conference, {CP} 2014, Lyon, France, September 8-12, 2014. Proceedings. Lecture Notes in Computer Science, vol.~8656, pp. 564--573. Springer (2014). \doi{10.1007/978-3-319-10428-7\_41}

\bibitem{cp/2024}
Shaw, P. (ed.): 30th International Conference on Principles and Practice of Constraint Programming, {CP} 2024, September 2-6, 2024, Girona, Spain, LIPIcs, vol.~307. Schloss Dagstuhl - Leibniz-Zentrum f{\"{u}}r Informatik (2024)

\bibitem{SohEtAl2017SolvingMultiobjectiveDiscrete}
Soh, T., Banbara, M., Tamura, N., Le~Berre, D.: Solving multiobjective discrete optimization problems with propositional minimal model generation. In: Beck, J.C. (ed.) Principles and Practice of Constraint Programming---23rd International Conference, {CP} 2017, Melbourne, {VIC}, Australia, August 28 -- September 1, 2017, Proceedings. Lecture Notes in Computer Science, vol. 10416, pp. 596--614. Springer (2017). \doi{10.1007/978-3-319-66158-2_38}

\bibitem{TerraNevesEtAl2018MultiObjectiveOptimization}
Terra-Neves, M., Lynce, I., Manquinho, V.: Multi-objective optimization through pareto minimal correction subsets. In: Proceedings of the Twenty-Seventh International Joint Conference on Artificial Intelligence. International Joint Conferences on Artificial Intelligence Organization (jul 2018). \doi{10.24963/ijcai.2018/757}

\bibitem{msc/Vandesande23}
Vandesande, D.: Towards Certified {MaxSAT} Solving: Certified {MaxSAT} solving with {SAT} oracles and encodings of pseudo-{Boolean} constraints. Master's thesis, Vrije Universiteit Brussel (VUB) (2023), \url{https://researchportal.vub.be/nl/studentTheses/towards-certified-maxsat-solving}

\bibitem{VDB22QMaxSATpbCertifiedMaxSATSolver}
Vandesande, D., {De Wulf}, W., Bogaerts, B.: {QMaxSATpb}: {A} certified {MaxSAT} solver. In: Gottlob, G., Inclezan, D., Maratea, M. (eds.) Logic Programming and Nonmonotonic Reasoning - 16th International Conference, {LPNMR} 2022, Genova, Italy, September 5-9, 2022, Proceedings. Lecture Notes in Computer Science, vol. 13416, pp. 429--442. Springer (2022). \doi{10.1007/978-3-031-15707-3\_33}

\end{thebibliography}

\appendix

\newpage

\section{Additional Empirical Data on Proof Checking}

Fig.~\ref{fig:checking-cuts} shows a comparison between the per-instance solving runtimes with proof logging
and the runtime of the \veripb proof checker on the produced proofs.
The color scale represents the range of the number of \pdcuts introduced during solving.
We observe that for \pminimal and \bioptsat without core boosting, a high number of \pdcuts
leads to a higher proof checking overhead, but for the other algorithms
there is no such clear connection.

\begin{figure}[h]
  \centering
  \begin{tabular}{cl}
    \multicolumn{2}{c}{\pminimal} \\
    with core boosting & without core boosting \\
    \includegraphics{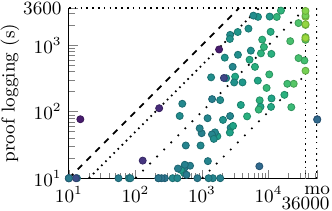} &
    \includegraphics{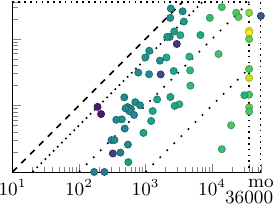} \\
    \multicolumn{2}{c}{\lowerbounding} \\
    \includegraphics{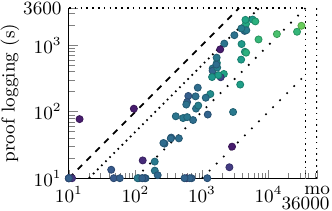} &
    \includegraphics{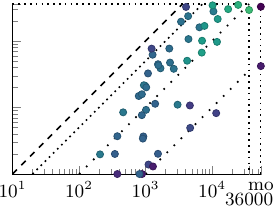} \\
    \multicolumn{2}{c}{\bioptsat} \\
    \includegraphics{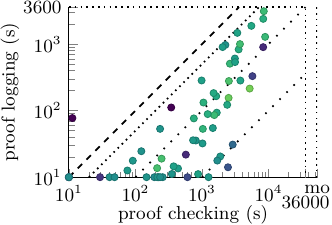} &
    \includegraphics{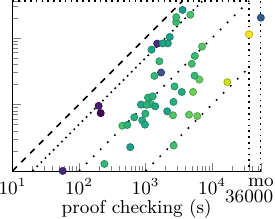} \\
    \vspace{-1em} \\
    \multicolumn{2}{c}{\# \pdcuts: \quad \(10^0\) \includegraphics{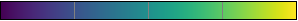} \(10^4\)} \\
  \end{tabular}
  \caption{Runtime comparison between solving with proof logging and proof checking  wrt to the number of \pdcuts.}\label{fig:checking-cuts}
\end{figure}

\end{document}